\newcommand{\G}{\mathcal{G}}
\newcommand{\V}{\mathcal{V}}
\newcommand{\E}{\mathcal{E}}
\newcommand{\R}{\mathbb{R}}
\newcommand{\B}{\mathcal{B}}
\newcommand{\hatThetat}[1]{\hat{\theta}_L^{(#1)}}
\newcommand{\normsquare}[1]{\|#1\|_2^2}
\newcommand{\commentalign}[1]{[\text{#1}]}
\newcommand{\EE}{\mathbb{E}}
\newcommand{\PP}{\mathbb{P}}
\newcommand{\VI}{\text{VI}[F,\bTheta]}
\newcommand{\EEXY}[1]{\EE_{X,Y}\{#1\}}
\newcommand{\Femp}{\widehat F}
\newcommand{\bTheta}{\Theta}
\newcommand{\YTheta}{Y({\theta_L})}
\newcommand{\SVI}{\Verb|SVI|}
\newcommand{\condexp}[2]{\EE[#1|#2]}
\newcommand{\etaT}[1]{\eta^{\intercal}(#1)}
\newcommand{\Xstar}[1]{X^*_{#1}}
\newcommand{\thetastar}[1]{\theta^*_{#1}}
\newcommand{\thetalower}[1]{\theta_{#1}}
\newcommand{\Lc}{\mathcal{L}}
\newlength{\tempdima}
\newcommand{\rowname}[1]
{\rotatebox{90}{\makebox[\tempdima][c]{\textbf{#1}}}}
\renewcommand{\thesubfigure}{\alph{subfigure}}
\newcommand{\mycaption}[1]
{\refstepcounter{subfigure}\textbf{(\thesubfigure) }{\ignorespaces #1}}
\newcolumntype{P}[1]{>{\centering\arraybackslash}p{#1}}
\theoremstyle{plain}
\newtheorem{theorem}{Theorem}[section]
\newtheorem{proposition}[theorem]{Proposition}
\newtheorem{lemma}[theorem]{Lemma}
\theoremstyle{definition}
\theoremstyle{remark}
\newtheorem{remark}[theorem]{Remark}
\begin{document}

\title{An alternative approach to train neural networks
using \\ monotone variational inequality}
\author[1]{Chen Xu\thanks{cxu310@gatech.edu}}
\author[2]{Xiuyuan Cheng\thanks{xiuyuan.cheng@duke.edu}}
\author[1]{Yao Xie\thanks{yao.xie@isye.gatech.edu}}
\affil[1]{{\small H. Milton Stewart School of Industrial and Systems Engineering, Georgia Institute of Technology.}}
\affil[2]{{\small Department of Mathematics, Duke University}}

\maketitle

\begin{abstract}
We propose an alternative approach to neural network training using the monotone vector field, an idea inspired by the seminal work of Juditsky and Nemirovski \citep{VI_est} developed originally to solve parameter estimation problems for generalized linear models (GLM) by reducing the original non-convex problem to a convex problem of solving a monotone variational inequality (VI). Our approach leads to computationally efficient procedures that converge fast and offer guarantee in some special cases, such as training a single-layer neural network or fine-tuning the last layer of the pre-trained model. Our approach can be used for more efficient fine-tuning of a pre-trained model while freezing the bottom layers, an essential step for deploying many machine learning models such as large language models (LLM). We demonstrate its applicability in training fully-connected (FC) neural networks, graph neural networks (GNN), and convolutional neural networks (CNN) and show the competitive or better performance of our approach compared to stochastic gradient descent methods on both synthetic and real network data prediction tasks regarding various performance metrics.
\end{abstract}
\section{Introduction}

Neural Network (NN) training \citep{Duchi2010AdaptiveSM,pmlr-v28-sutskever13,kingma2014adam,Ioffe2015BatchNA} is the essential process in the study of deep models. Optimization guarantee for training loss, as well as generalization error, have been obtained with over-parameterized networks \citep{neyshabur2014search,mei2018mean,arora2019exact,arora2019fine,allen2019convergence,du2019gradient}. However, due to the inherent non-convexity of loss objectives, theoretical developments are still diffused and lag behind the vast empirical successes.

Recently, the seminal work \citep{VI_est} presented a somehow surprising result that some non-convex issues can be circumvented in special cases by problem reformulation. In particular, it was shown that when estimating the parameters of the GLM, instead of minimizing a least-square loss function, which leads to a non-convex optimization problem, no guarantees can be obtained for global convergence nor model recovery, one can reformulate the problem as solving a monotone VI, a general form of convex optimization. The reformulation through monotone VI leads to performance guarantees and computationally efficient procedures.

In this paper, inspired by \citep{VI_est} and the fact that certain GLM (such as logistic regression) can be viewed as a layer in neural networks, we consider a new scheme for neural network training based on monotone VI. Our approach is a drastic departure from the widely used gradient descent algorithm for neural network training --- we replace the gradient of a loss function with a constructed monotone operator to achieve faster convergence, which we demonstrate empirically, and further guaranteed convergence in some special cases, such as one layer neural network or fine-tuning the last layer of a neural network.

Our approach can lead to a more efficient fine-tuning of a pre-trained neural network model: training the last layer and ``freezing'' the rest of the layers, with fast convergence and guarantee. Fine-tuning has been shown to be effective in leveraging pre-trained models on large datasets for a similar or related task \citep{liu2022few}. Fine-tuning is a common and essential practice in large language models \citep{ding2023parameter} to improve performance over the unmodified pre-trained model on downstream tasks. For some architectures, such as CNN, it is common to keep the earlier layers (those closest to the input layer) frozen because they capture lower-level features. In contrast, upper layers often focus on high-level features that can be more related to the downstream task. In our paper, we demonstrate (i) for fine-tuning the last layer, we can establish training and prediction guarantees -- see Section \ref{sec:guarantee}; (ii) for fine-tuning the top few (more than one) layers, through extensive numerical studies on synthetic and real-data in Section \ref{sec:expr_main}, we demonstrate the faster convergence to a local solution by our approach relative to gradient descent in a comparable setup. 

To our knowledge, the current paper is the first to study monotone VI for training neural networks. The proposed \SVI, as a general way of modifying the parameter update scheme in NN training, can be applied to various deep architectures. In this work, beyond FC neural networks, we especially study monotone VI training for node classification in GNN \citep{Wu2019ACS,Pilanci2020NeuralNA}, due to the ubiquity of network data and the importance of network prediction problems. We also study monotone VI training for image classification.

In summary, our technical contributions include: 
\begin{itemize}
    \item Develop a general and practical algorithm for training neural networks using vector field constructed by monotone VI, called \textit{stochastic variational inequality} (\SVI). Our work demonstrates the potential power of training neural networks by monotone VI, the idea initially introduced in \cite{VI_est} to estimating statistical GLM. The algorithm provides a fundamentally different but easy-to-implement first-order alternative from the commonly used stochastic gradient descent (SGD) of the empirical loss function. 
    
    \item The computation cost per step is similar between \SVI{} and gradient-based methods. However, training by monotone VI can lead to guarantee in the special case of last-layer training and faster empirical convergence, which is particularly valuable for fine-tuning pre-trained neural network models, as demonstrated by our numerical experiments. 
    
    \item Compare \SVI \ with widely-used SGD methods to demonstrate that \SVI \ is flexible on various tasks and competitive against SGD, especially the improved efficiency in the early stage of training.
\end{itemize}

{\it Literature.} monotone VI
has been studied in priors mainly in the optimization problem context \citep{kinderlehrer2000introduction,Facchinei2003FiniteDimensionalVI}, which can be viewed as the most general problems with convex structure \citep{juditsky2022well}. More recently, VI has been used to solve min-max problems in Generative Adversarial Networks \citep{liu2021first} and reinforcement learning \citep{kotsalis2022simple}. In particular, our theory and techniques are inspired by \citep{VI_est}, which uses strongly monotone VI for signal recovery in GLM. In contrast to their work, we thoroughly investigate using VI to train multi-layer NN and address cases when VI may not be strongly monotone. 

Meanwhile, our techniques bear similarity to works on ``matching loss'' \citep{amid2022locoprop} but are fundamentally different as we leverage monotone VI theory and lead to performance guarantees. 
On the other hand, we emphasize the difference from \citep{Pilanci2020NeuralNA}, which views two-layer NNs as convex regularizers: we focus on model recovery rather than change variables to convexify the loss minimization. In addition, our \SVI \ extends beyond two-layer networks.
Furthermore, a recent work \citep{Combettes2023VINN} proposed a similar VI-inspired approach in training neural networks, with corresponding convergence guarantees of model parameters. In contrast to our \SVI{}, their proposed method cannot train multiple layers simultaneously, and the technique only demonstrated improved empirical performance over gradient-based methods on training the last layer of a deep neural network.

The rest of the work proceeds as follows. Section \ref{setup} introduces general NN setup and provides preliminaries of monotone VI. Section \ref{sec:VI_training} provides the \SVI \ algorithm for training one/last-layer neural networks. Section \ref{sec:guarantee} presents various theoretical guarantees including the prediction performance. Section \ref{sec:heuristic} proposes a heuristic algorithm for training multiple layers, which we demonstrate to work well using numerical examples. Section \ref{sec:expr_main} compares \SVI \ and the SGD on various synthetic and real-data examples to illustrate the potential benefits of \SVI. Section \ref{sec:conclusion} concludes the paper. Appendix \ref{sec:theory_append} contains all proofs.

\section{Problem setup}\label{setup}

Section \ref{sec:NN_notation} introduces the general neural network model and GNN notations, and Section \ref{sec:VI_prelim} introduces the monotone VI preliminaries.

\subsection{General neural networks} \label{sec:NN_notation}

Given $N$ training samples $\{(X^{(i)},Y^{(i)}\}_{i=1}^N$, where the $d$-dimensional features $X^{(i)} \in \mathbb R^d$, and $Y^{(i)}$ are scalars, which can be real-valued or categorical valued, general neural network learning aims to fit a function in the data. Suppose the neural network has $L$-layers, then
\[X_{l+1}=\phi_{l}(g_{l}(X_{l},\theta_{l})), \quad l = 1, \ldots, L-1,\] denote the nonlinear feature transformation from the previous layer, and $X_1=X \in \mathbb R^d$ is the input with $d$-dimensional features; $\theta=\{\theta_1,\ldots,\theta_L\}$ denotes model parameters, each $g_l$ is the linear ``pre-activation.'' For example, for fully connected layer,  $g_l(x, \theta_l) = W_l^T x + b_l$, and $\theta_l = \{W_l, b_l\}$. Different linear pre-activations can be used depending on the data type, including CNN and GNN layers. Each $\phi_l$ denotes the activation function at layer $l$, for example, the sigmoid function $\phi_l(z) =1/(1+e^{-z})$. 
The final output of the neural network is 
\[Y = \phi_L (g_L(X_{L},\theta_L)).\]
The neural network is a function $f = \phi_L \circ g_L \circ \cdots \circ \phi_1 \circ g_1$ with parameter $\theta$, as illustrated in Fig. \ref{nn}, and  $\circ$ in \eqref{equiv} denotes function composition.

For real-valued response, the neural network is typically learned by minimizing the mean-square error loss:
\begin{equation}
    \mathcal L(\theta):= \frac 1N\sum_{i=1}^N (Y^{(i)} - f(X^{(i)}, \theta))^2.
    \label{LS}
\end{equation}
Due to the highly non-linear $f$, the problem is non-convex.
Training neural networks is commonly done by gradient descent, SGD, or other first-order methods using the gradient of the loss function $\mathcal L(\theta)$ in \eqref{LS} with respect to $\theta$.

\begin{figure}[!h]
\centering
\includegraphics[width=\linewidth]{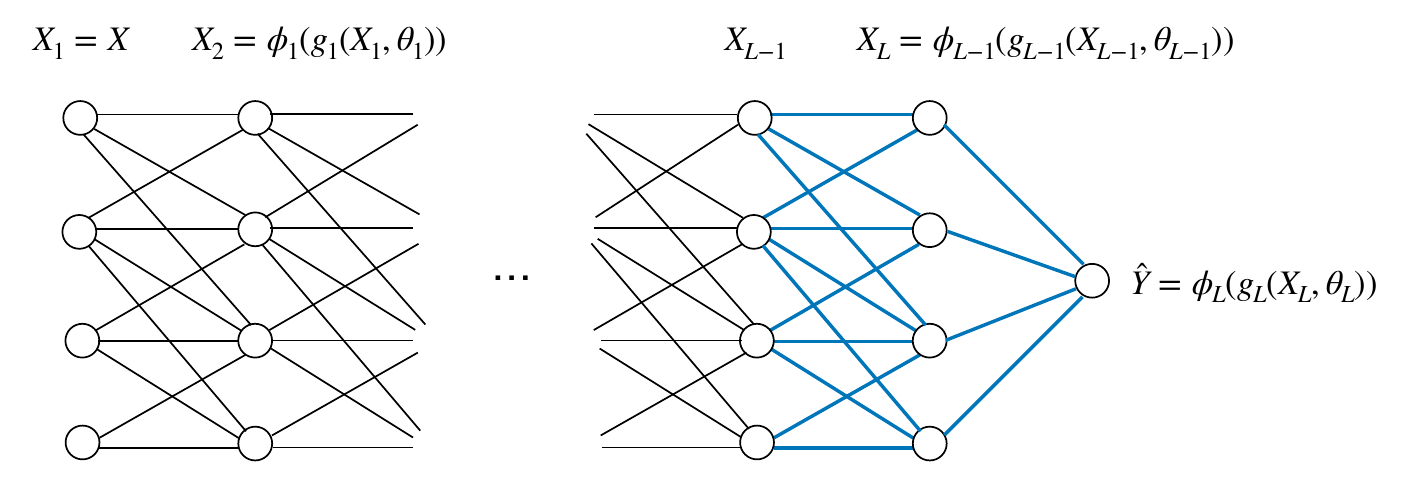}
\cprotect \caption{Diagram to show fine-tuning by training the last two layers for a neural network. Black lines indicate frozen parameters and dark green lines indicate parameters to be fine-tuned by training; in this plot, the last two layers are to be trained.
}
\label{nn}
\end{figure}

\subsection{Preliminaries of monotone VI}\label{sec:VI_prelim}
We now introduce monotone VI. Given a parameter set $\bTheta \subset \R^{p}$, we call a continuous mapping (operator) $F: \bTheta \rightarrow \R^p$ \textit{monotone} if for all $\theta_1, \theta_2 \in \bTheta$, 
$
\langle F(\theta_1)-F(\theta_2),\theta_1-\theta_2 \rangle \geq 0
$ \citep{VI_est}.
The operator is called \textit{strongly monotone} with modulus $\kappa > 0$ if for all $\theta_1, \theta_2 \in \bTheta$, 
\begin{equation}\label{eq:modulus}
    \langle F(\theta_1)-F(\theta_2),\theta_1-\theta_2 \rangle \geq \kappa \|\theta_1-\theta_2\|^2_2.
\end{equation}
If $F \in C^1(\bTheta)$ (i.e., continuously differentiable on $\bTheta$) and $\bTheta$ is closed and convex with non-empty interior, (\ref{eq:modulus}) holds if and only if $\lambda_{\min}(\nabla F(\theta))\geq \kappa$ for all $\theta \in \bTheta$, where $\lambda_{\min}(\cdot)$ denotes the minimum eigenvalue of a square matrix.

Now, for a monotone operator $F$ on $\bTheta$, the problem called $\VI$ is to solve the following
\[
\mbox{Find } \bar{\theta} \in \bTheta, ~~s.t. ~~ 
\langle F(\bar{\theta}), \theta-\bar{\theta} \rangle \geq 0, \forall \theta \in \bTheta. \qquad \VI
\]
It is known that if $\bTheta$ is compact, then $\VI$ has at least one solution \citep[Theorem 4.1]{outrata2013nonsmooth}. In addition, when $\bTheta$ is a convex set, then $\VI$ is a convex problem, and  if $\kappa>0$ in (\ref{eq:modulus}), then $\VI$ has exactly one solution \citep[Theorem 4.4]{outrata2013nonsmooth}. Under mild regularity assumptions, the solution can be solved efficiently to high accuracy, using various iterative schemes such as SGD or the accelerated versions by replacing the gradient with the vector field defined by monotone operator \citep{VI_est}. 

\section{Monotone VI for last-layer training}\label{sec:VI_training}

We now present the monotone VI-based algorithms for training neural networks. We start with simple one/last-layer network training with monotone VI, where the framework and techniques directly come from \citep{VI_est}, highlighting the similarity/difference with SGD. We then generalize to training multiple-layer neural networks in Section \ref{sec:heuristic}.

The motivation for using monotone VI for training is by observing that in each NN layer, the weighted summation and then passing through a monotone non-linear activation can be related to a GLM. Suppose the conditional expectation $\EE[Y|X]$ of the response vector is modeled by an $L$-layer neural network $f(X,\theta)$. We make the following assumption
\begin{center}
There exists an ``ideal model''  $\theta^*=\{\thetastar{1},\ldots,\thetastar{L}\}$ such that $\EE[Y|X]=f(X,\theta^*)$.
\end{center}
where the expectation is with respect to the conditional distribution of $Y$ on $X$; i.e., the best prediction given $X$ on $Y$ that minimizes the mean square error is specified by $f$ with the ideal parameter. Such an assumption can be assumed to be reasonable if $f$ being parameterized by expressive enough neural networks. Note that this can hold for both continuous and categorical responses.

Let us first consider a single-layer neural network (i.e., $L=1$) 
\[g(X,\theta)=\eta(X)\theta, \quad \mathbb E[Y|X] = \phi(\eta(X)\theta).\] for a given feature transformation $\eta$ from the input $X$. We construct the monotone operator $F$ as, inspired by solving GLM in \citep{VI_est}:
\begin{equation}\label{eq:operatr}
    F(\theta):=\EEXY{\etaT{X}[\phi (\eta(X)\theta)-Y]},
\end{equation}
where $z^\intercal$ denotes the transpose of $z$. It can be shown that when $\phi$ is a monotone function, $F$ is monotone \cite{VI_est}.
We will further explain a few key properties of $F$ in Sec. \ref{sec:guarantee}, Lemma \ref{lem1}. 

Given training samples, we can form a sample version of $F$ using training data, denoted by $\widehat F$. Then, we can train the model using a vector field by monotone operator, which we call the stochastic variational inequality (\SVI)  algorithm
\begin{equation}\label{eq:VI_training}
\theta \leftarrow \theta - \gamma \widehat F(\theta),%
\end{equation}
where $\gamma>0$ is the step-size.  
Due to the monotone vector field property of $F$, we can prove convergence and guarantee of the training iteration \eqref{eq:VI_training} as we show in Section \ref{sec:guarantee}. 
\SVI~ based on \eqref{eq:VI_training} differs from SGD, which uses the gradient of a specific loss objective. In \SVI, the iteration follows a vector field constructed using the monotone operator $\widehat F$, which does not need to correspond to the gradient of a loss function. Nevertheless, it is known when we minimize a convex objective, $\widehat F$ corresponds to the gradient of the objective function: in Section \ref{sec:equivalence}, we show when minimizing the cross-entropy loss if $\phi$ is either the sigmoid or softmax, $\widehat F$ corresponds to the gradient with respect to parameter $\theta$ whereby \eqref{eq:VI_training} coincides with SGD.

In the context of fine-tuning the last layer, given an input $X$  to the neural network, $\eta(X)=X_L$, i.e., the feature extracted by previous $L-1$ ``frozen'' layers and 
\begin{equation}\label{eq:hidden_feature}
X_L =  \phi_{L-1}\circ g_{L-1} \circ \cdots \phi_{1}\circ g_{1}  (X, \theta).
\end{equation}
This way, we can cast the one/last-layer training as solving a monotone VI, and provide a prediction bound for $\mathbb{E}_X \{\|f(X,\widehat \theta)-f(X, \theta^*)\|_2^2\}$ for estimator $\widehat \theta$ obtained using \SVI. When we train the last few layers, we can generalize this approach and present an algorithm in Section \ref{sec:heuristic}.

\section{Guarantee of monotone VI for last-layer training}\label{sec:guarantee}

We now present guarantees on convergence and recovery of ``ideal'' parameters for the \textit{last-layer training}. In particular, we consider learning of $\theta_L$ when $g_L(X_L,\theta_L)=\eta(X)\theta_L$, where $\eta(X)$ is defined in \eqref{eq:hidden_feature} for last-layer training. In Section \ref{sec:modulus_1}, we can provide an error bound on predicting $\condexp{Y}{X}$ averaged over independent test samples $X$. 

Define the set $\mathcal D^{\rm Tr}:=\{(X^{(i)},Y^{(i)})\}_{i=1}^N$ containing $N$ training samples, and assume the training samples are generated using the ``ideal model'' with $\theta=\theta^*$. Let $\Xstar{i,L}$ be the extracted feature, under the ideal model, from previous $L-1$ ``frozen'' layers, following the definition in \eqref{eq:hidden_feature} with the input $X=X^{(i)}$:
\[\eta^*(X^{(i)}):= \Xstar{i,L} :=  \phi_{L-1}\circ g_{L-1} \circ \cdots \phi_{1}\circ g_{1}  (X^{(i)}, \theta^*).\]
For the monotone operator $F(\theta_L)$ in (\ref{eq:operatr}), consider its empirical version
\[
\Femp(\theta_L)= \frac{1}{N} \sum_{i=1}^N {\Xstar{i,L}}^\intercal[\phi_L (\Xstar{i,L} \theta_L)-Y_i].
\]

Let $\widehat{\theta}_L^{(T)}$ denote the learned parameter after $T$ training steps following the \SVI~ iteration \eqref{eq:VI_training} using $\Femp$ above. For a new test sample $X$ with featured mapped under the ideal model with first $L-1$ layers 
\[\Xstar{L} = \eta^*(X),\] we consider the prediction for the test sample $X$ under the learned parameters
\begin{equation}\label{posterior_estimate}
\widehat{Y}(X,\widehat{\theta}_L^{(T)}):=\phi_L(\Xstar{L}\widehat{\theta}_L^{(T)}).
\end{equation}
The prediction  \eqref{posterior_estimate} will be measured against the ideal model prediction $\condexp{Y}{X}=f(X,\theta^*)$. 

We first state a few properties of $F(\theta_L)$ defined in (\ref{eq:operatr}) using $\eta^*(X)$, i.e., the feature mapped using ideal model and identify the form of $\kappa$. 
\begin{lemma}\label{lem1}
Assume $\phi_L: \mathbb R^p \rightarrow \mathbb R^p $ is $K$-Lipschitz continuous and monotone on its domain. For arbitrary test sample $X$,
$F(\theta_L)$ is monotone with modulus $\kappa$ and $K_2$-Lipschitz, where $K_2=K \EE_X\{ \|\eta^*(X)\|^2_2$\}, and
\begin{equation}
    \kappa=\lambda_{\min}(\nabla \phi_L) \lambda_{\min}(\EE_X[{\eta^*(X)}^\intercal \eta^*(X)]). \label{kappa_def}
\end{equation}
Morevoer, $F(\theta^*_L)=0$.

\end{lemma}
The property $F(\theta^*_L)=0$ implies that the ideal last-layer model parameter $\theta^*_L$ is a solution to $\VI$ constructed for the last-layer. In particular, if $\kappa>0$, it is the \textit{unique} solution. As a result, $\theta^*_L$ can be efficiently solved to a high accuracy using $F(\theta_L)$ under appropriated chosen $\gamma$. 
the precise statement is provided in Section \ref{sec:modulus_1}.

\subsection{Case 1: modulus $\kappa > 0$}\label{sec:modulus_1}

Under appropriate step size selection and additional regularity assumptions, we can obtain bounds on the recovered parameters following techniques in \citep{VI_est}, which further enables us to bound the error of model prediction. Note that the guarantee relies on $\kappa$, which may be estimated using $N$ training samples.
\begin{lemma}[Parameter recovery guarantee]\label{thm:alg_guarantee}
    Suppose that there exists $M<\infty$ such that  $ \forall \theta \in \bTheta$, 
    \[
    \EE_{X,Y(\theta)}{\|X_{L}^* Y(\theta)\|_2}\leq M,
    \]
    where $\condexp{Y(\theta)}{X}=\phi_L(X_{L}^*\theta)$. Choose adaptive step sizes $\gamma_t=[\kappa(t+1)]^{-1}$ in \eqref{eq:VI_training}. Then the sequence of estimators $\widehat{\theta}_L^{(T)}$ obeys the error bound 
    \begin{equation}\label{eq:algo_err_bound}
        \EE_{\mathcal D^{\rm Tr}}{\{\|\widehat{\theta}_L^{(T)}-\theta^*_L\|^2_2\} \leq \frac{4M^2}{\kappa^2(T+1)}}.
    \end{equation}
    Above, the expectation is with respect to the randomness of training data, since $\widehat{\theta}_L^{(T)}$ depends on training data $\mathcal D^{\rm Tr}$.
\end{lemma}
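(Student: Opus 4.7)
The plan is to run the classical stochastic-approximation argument for strongly monotone VI (as in Juditsky--Nemirovski and \citep{VI_est}) against the particular harmonic step-size $\gamma_t = [\kappa(t+1)]^{-1}$. First I would write the one-step recursion by exploiting non-expansiveness of the Euclidean projection onto the convex set $\bTheta$ (which contains $\Theta^*_L$ since, by Lemma \ref{lem1}, $F(\Theta^*_L)=0$ makes $\Theta^*_L$ a VI solution). Setting $a_t := \mathbb{E}\|\widehat{\Theta}_L^{(t)} - \Theta^*_L\|_2^2$, non-expansiveness and expanding the square yield
\[
\|\widehat{\Theta}_L^{(t+1)} - \Theta^*_L\|_2^2 \le \|\widehat{\Theta}_L^{(t)} - \Theta^*_L\|_2^2 - 2\gamma_t \langle \widehat F(\widehat{\Theta}_L^{(t)}), \widehat{\Theta}_L^{(t)} - \Theta^*_L\rangle + \gamma_t^2 \|\widehat F(\widehat{\Theta}_L^{(t)})\|_2^2.
\]

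Next I would take conditional expectation with respect to the natural filtration $\mathcal{F}_t$. Since $\widehat F$ is the empirical average over i.i.d.\ training samples, $\mathbb{E}[\widehat F(\widehat{\Theta}_L^{(t)})\mid \mathcal{F}_t] = F(\widehat{\Theta}_L^{(t)})$; invoking strong monotonicity with modulus $\kappa$ together with $F(\Theta^*_L)=0$ lower-bounds the cross term by $2\gamma_t\kappa\, a_t$, producing
\[
a_{t+1} \le (1 - 2\kappa\gamma_t)\, a_t + \gamma_t^2\, \mathbb{E}\|\widehat F(\widehat{\Theta}_L^{(t)})\|_2^2.
\]
For the remaining variance term I would split $\widehat F_i(\Theta_L) = \eta^T\phi(\eta\Theta_L) - \eta^T Y_i$ by the triangle inequality, identify $\eta^T\phi(\eta\Theta_L) = \mathbb{E}[\eta^T Y^{\Theta_L}\mid X]$, apply Jensen to that conditional mean, and then use the hypothesis $\mathbb{E}\|\eta(X^*_L)Y^{\Theta_L}\|_2 \le M$ on both summands to obtain a bound of the form $\mathbb{E}\|\widehat F(\Theta_L)\|_2^2 \le 4M^2$. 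I expect this is where the main technical care is needed: the assumption is stated as a first-moment bound, so upgrading to an $L^2$ bound either requires reading this assumption as an almost-sure or $L^2$ bound, or bootstrapping via Cauchy--Schwarz; I would follow \citep{VI_est}[proof of their stochastic VI rate] verbatim and carry the constant $4M^2$ through.

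The final step is to unroll the recursion with $\gamma_t = 1/[\kappa(t+1)]$, which simplifies the contraction factor to $(t-1)/(t+1)$ and the noise term to $4M^2/[\kappa^2(t+1)^2]$. An induction on $t$ with the ansatz $a_t \le 4M^2/[\kappa^2(t+1)]$ closes the argument in a single line: assuming the bound at $t$,
\[
a_{t+1} \le \frac{t-1}{t+1}\cdot\frac{4M^2}{\kappa^2(t+1)} + \frac{4M^2}{\kappa^2(t+1)^2} = \frac{4M^2 t}{\kappa^2 (t+1)^2} \le \frac{4M^2}{\kappa^2 (t+2)},
\]
using $t(t+2)\le (t+1)^2$. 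The base case $t=0$ reduces to controlling $a_0$ by $4M^2/\kappa^2$, which I would absorb into the standing assumption that $\bTheta$ has diameter compatible with the noise scale (or by starting the induction at the first $t_0$ for which the harmonic bound is non-vacuous). Modulo the second-moment bookkeeping, the argument is thus a direct transplant of the strongly monotone stochastic VI rate to the last-layer operator $F$ identified by Lemma \ref{lem1}.
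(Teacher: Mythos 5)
Your proposal follows essentially the same route as the paper's proof: non-expansiveness of the projection, expansion of the square, unbiasedness of $\widehat F$ plus strong monotonicity with $F(\Theta_L^*)=0$ to get the contraction $(1-2\kappa\gamma_t)$, the second-moment bound $4M^2$ on the operator via Jensen, and the same harmonic-step induction with the same ansatz. The one piece you defer to an extra assumption --- controlling the base case $a_0$ --- the paper actually derives: it bounds the diameter $B$ of $\bTheta$ by $2M/\kappa$ by pairing strong monotonicity ($\langle F(\Theta^+)-F(\Theta^-),\Theta^+-\Theta^-\rangle \ge \kappa B^2$) against the Cauchy--Schwarz upper bound $2MB$ coming from the same moment hypothesis, so no additional assumption on $\bTheta$ is needed. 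Your observation that the stated first-moment hypothesis must be read as a second-moment bound to justify $\mathbb{E}\|\widehat F\|_2^2\le 4M^2$ is a fair criticism that applies equally to the paper's own argument.
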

One implication of the lemma is that the convergence rate of the algorithm depends on the smallest eigenvalue of the covariance matrix of the extracted features $\theta(X)$; if the extracted features after $L-1$-layers are more uncorrelated with each other, the smallest eigenvalue will be larger. We will show numerically that, in practice, the performance depends on the smallest eigenvalue of the sample covariance matrix of $\theta(X)$ and, thus, depends on the sample size.

We can use the Lemma \ref{thm:alg_guarantee} to bound the prediction error as below. 

\begin{theorem}[Prediction error using strongly monotone $F$]\label{thm:generalization_err}
Given a test sample $X$, the expected prediction error by \eqref{posterior_estimate} is bounded by 
\[
\mathbb E_{(X, \mathcal D^{\rm Tr})}\{\|\widehat{Y}(X, \widehat{\theta}_L^{(T)})-\condexp{Y}{X}\|^2_2\} \leq 
\frac{C_t}{T+1},
\]
where $C_t=4M^2K^2\lambda_{\max}(\mathbb E_X [{X_{L}^*}^\intercal {X_{L}^*}])\cdot\kappa^{-2}$, and the expectation is with respect to the randomness of training samples and test sample, respectively.
\end{theorem}
Note that $p=2$ yields the sum-of-squared error bound on prediction, and $p=\infty$ yields an entry-wise bound.
Additionally, the proof of Theorem \ref{thm:generalization_err} only requires access to an unbiased estimator of $F$, so that the batch size can range from one to $N$, where $N$ is the size of the training data.

\begin{remark}[When does $\kappa>0$?]
Recall that the modulus $\kappa$ is defined as \eqref{kappa_def}. To have $\lambda_{\min}(\mathbb{E}_X[X^{* \intercal}_L\Xstar{L}])$ is bounded away from zero, we thus are concerned with the minimum eigenvalue of the gradient of $\phi$ acting on its inputs. Note that the Jacobian matrix is diagonal when $\phi_L$ is an element-wise function on its vector inputs. In this case, we only need the element-wise activation function to be continuously differentiable with positive derivatives; for instance, the sigmoid function, for any $y\in \R^p$
\[
\lambda_{\min}[\nabla \phi_L]|_{y}=\min_{i=1, \ldots, p} \phi_L(y_i)(1-\phi_L(y_i)),
\]
which is bounded away from zero. However, the ReLu function does not satisfy this requirement.
\end{remark}

\subsection{Case 2: modulus $\kappa = 0$}\label{sec:modulus_2}

In practice, we may encounter cases where the operator $F$ is only monotone but not strongly monotone. For instance, when $\phi$ is the softmax function that applied element-wise to $\eta^*(X)\theta_L \in \mathbb R^{n\times F}$, the Jacobian matrix is block-diagonal. For any vector $z\in \R^F$, $\nabla \phi(z)=\text{diag}(\phi(z))-\phi(z)\phi(z)^T$, which satisfies $\nabla \phi(z)\boldsymbol 1=\boldsymbol 0$ for any $z$ \citep[Proposition 2]{softmax}. Therefore, the minimum eigenvalue of the gradient matrix of $\phi$ is always zero, and hence $\kappa=0$. 

In this case, note that the solution of $\VI$ needs not be unique: a solution $\bar{\theta}_L$ that satisfies the condition $\langle F(\bar{\theta}_L), \theta-\bar{\theta}_L\rangle \geq 0, \ \forall \theta \in \bTheta$ may not correspond to the ideal $\thetastar{L}$ (so there is no identifiability). Nevertheless, we directly approximate the zero of $F$ by using the operator extrapolation method (OE) in \citep{kotsalis2022simple}.
We then have an $\ell_p$ bound on prediction error:
\begin{theorem}[Prediction error using monotone $F$]\label{thm:generalization_err_nostrong}
Suppose we run the OE algorithm \citep{kotsalis2022simple} for $T$ iterations with $\lambda_t=1$, $\gamma_t=1/(4K_2)$, where $K_2$ is the Lipschitz constant of $F$. Let $R$ be uniformly chosen from $\{2,3,\ldots,T\}$. Then
\[
    \EE_{\mathcal D^{\rm Tr}} \{\| \EE_X\{\sigma_{\min} (\eta^*(X)^{\intercal})[\widehat{Y}(X, \widehat{\theta}_L^{(R)})-\condexp{Y}{X}]\}\|_2 \}\leq 
    \frac{C_t^{''}}{\sqrt T},
\]
where $\sigma_{\min} (\cdot)$ denotes the minimum singular value of its input matrix and the constant $C_t^{''}=3\sigma+12K_2(2\|\thetastar{L}\|_2^2+2\sigma^2/L^2)^{1/2}$, in which $\sigma^2=\EE_{\mathcal D^{\rm Tr}}[(\widehat{F}(\theta_L)-F(\theta_L))^2]$ is the variance of the unbiased estimator $\widehat{F}(\theta_L)$.
\end{theorem}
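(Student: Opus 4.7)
The plan is to reduce the claimed prediction bound to a bound on $\|F(\widehat{\Theta}_L^{(R)})\|$ via the identity $F(\Theta_L^*)=0$, and then to invoke the stochastic operator-extrapolation convergence rate of \citep{Georgios20I} to obtain the $T^{-1/2}$ rate with the constant $C_t''$. The workflow naturally splits into a VI-convergence step and a VI-to-prediction translation step.

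For the first step, Lemma \ref{lem1} already gives that $F$ is monotone and $K_2$-Lipschitz with $F(\Theta_L^*)=0$, and by construction $\widehat F_i$ is an unbiased estimator of $F$ with variance $\sigma^2$. The step sizes $\gamma_t = 1/(4K_2)$ and extrapolation weights $\lambda_t = 1$ fall under the hypotheses of the OE convergence theorem in \citep{Georgios20I} for monotone (non-strongly monotone) stochastic VI. Applying that theorem with the uniform random stopping $R$ on $\{2,\dots,T\}$, and matching the initial-distance and noise-variance terms in their bound, should yield
$$\mathbb{E}_{\widehat{\Theta}_L^{(R)}}\bigl[\|F(\widehat{\Theta}_L^{(R)})\|_2\bigr] \;\le\; \frac{3\sigma + 12K_2\sqrt{2\|\Theta_L^*\|_2^2 + 2\sigma^2/L^2}}{\sqrt{T}} \;=\; \frac{C_t''}{\sqrt{T}}.$$
This is the main rate and reproduces the $T^{-1/2}$ dependence in the theorem.

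For the second step, I would translate this operator-norm bound into the stated prediction bound. Since $F(\Theta_L^*)=0$ and $\mathbb{E}[Y_t|X_t]=\phi_L(\eta(X_{t,L}^*)\Theta_L^*)$, one can subtract $0=F(\Theta_L^*)$ from $F(\widehat{\Theta}_L^{(R)})$ and identify
$$F(\widehat{\Theta}_L^{(R)}) \;=\; \mathbb{E}_X\!\bigl[\eta^T(X_L^*)\bigl(\widehat{\mathbb{E}}[Y_t|X_t] - \mathbb{E}[Y_t|X_t]\bigr)\bigr].$$
The pointwise inequality $\|\eta^T(X_L^*)v\|_2 \ge \sigma_{\min}(\eta^T(X_L^*))\|v\|_2$, together with the ordering $\|\cdot\|_p \le \|\cdot\|_2$ for $p \in [2,\infty]$, then provides the link between the full operator and its singular-value-weighted counterpart in the stated $\ell_p$ norm.

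The main obstacle is commuting the scalar factor $\sigma_{\min}(\eta^T(X_L^*))$ with the expectation $\mathbb{E}_X$: the theorem places $\mathbb{E}_X$ inside the $\ell_p$ norm, whereas the singular-value inequality is pointwise in $X$. The cleanest route I see is to pass to the squared $\ell_2$ norm, combine Jensen's inequality $\|\mathbb{E}[Z]\|_2^2 \le \mathbb{E}[\|Z\|_2^2]$ with the pointwise bound $\sigma_{\min}^2(\eta^T(X_L^*))\|v(X)\|_2^2 \le \|\eta^T(X_L^*) v(X)\|_2^2$, and then relate the resulting quantity back to the OE output, either through a further Jensen step or by adjusting the OE analysis to directly track $\mathbb{E}_X[\|\eta^T(X_L^*) v(X)\|_2^2]$ in place of $\|F\|_2^2$. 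Sorting out which of these routes preserves exactly the constant $C_t''$ as stated, and carrying the $\ell_p$-to-$\ell_2$ comparison cleanly through the two layers of expectation, is where the most delicate bookkeeping in the proof lives.
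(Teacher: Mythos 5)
Your proposal follows essentially the same route as the paper: it invokes Theorem 3.8 of \citep{Georgios20I} (the paper phrases this via the residual $\min_{y}\|y-F\|_2$ over the normal cone, which collapses to $\|F(\widehat{\Theta}_L^{(R)})\|_2$ once the parameter set is taken to be the whole space) to get $\mathbb{E}\|F(\widehat{\Theta}_L^{(R)})\|_2 \le C_t''/\sqrt{T}$, and then translates this into the prediction bound using $F(\Theta^*_L)=0$, the identification of $F$ with $\mathbb{E}_X\{\eta^{\intercal}(X_L^*)[\widehat{\mathbb{E}}[Y_t|X_t]-\mathbb{E}[Y_t|X_t]]\}$, the singular-value inequality $\sigma_{\min}(A)\|x-x'\|_2\le\|A(x-x')\|_2$, and the monotonicity of $p$-norms. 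The subtlety you flag at the end --- that the singular-value inequality is pointwise in $X$ while the theorem places $\mathbb{E}_X$ inside the norm --- is a genuine one, but the paper's own proof does not resolve it any more carefully than you do: it simply asserts the final inequality ``in expectation,'' so your attempt matches the paper's argument in both substance and level of rigor.
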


The convergence rate in Theorem \ref{thm:generalization_err_nostrong} is unaffected by the batch size, which only serves to reduce the variance. In addition, Theorem \ref{thm:generalization_err_nostrong} requires $R$ be uniformly chosen from $\{2,3,\ldots,T\}$, so that the theoretical guarantee holds at a random training epoch. In theory, this assumption is necessary to ensure a decrease of the norm of the monotone operator (\citep{kotsalis2022simple}, Eq. (3.20)). In practice, we observed that the epoch that leads to the highest validation accuracy might not occur at the end of $T$ training epochs, so this assumption is reasonable based on empirical evidence.

\subsection{Special cases: Monotone operator vector field becomes gradient} \label{sec:equivalence}

In practice, neural networks are commonly trained via empirical loss minimization, in contrast to solving the monotone VI approach here. Nevertheless, it can be shown that if not solving minimizing the mean-square-error as in \eqref{LS}, but instead using the cross-entropy loss when $\phi$ is either the sigmoid function or the softmax function, the monotone operator $F$ defined in \eqref{eq:operatr} will corresponds to the expectation of the gradient of the loss with respect to parameters. As a result, the two approaches (SGD and \SVI) yield the same last-layer training algorithm. 

Consider a pair of input $X \in \R^{d}$ and output $Y \in \{0,\ldots, k\}$. Let $e_Y\in \{0,1\}^{k+1}$ be the one-hot encoding of $Y$.
Now, the cross-entropy loss $\Lc(\theta_L)$ is 
\begin{align}
    \Lc(\theta_L)&=-Y\log (\phi_L(\eta^*(X)\theta_L))-(1-Y)\log (1-\phi_L(\eta^*(X)\theta_L)), && \text{$Y \in \{0,1\}$}. \label{eq:binaryCE}\\
    \Lc(\theta_L)&=-e_{Y}^T \log(\phi_L(\eta^*(X)\theta_L)), \quad \text{$Y \in \{0,\ldots,k\}, k>1$}. \label{eq:multiCE}
\end{align}
In binary classification \eqref{eq:binaryCE}, $\phi_L(x)=1/(1+e^{-x})$ is the element-wise sigmoid function. In multi-class classification \eqref{eq:multiCE} $\phi_L(x)= [e^{x_1}/\sum_j e^{x_j}, \ldots, e^{x_{k+1}}/\sum_j e^{x_j}]$ is the softmax function. We now have the following proposition.

\begin{proposition}[The equivalence between SVI and parameter gradient] \label{prop:equivalence} 
Under the setup for \eqref{eq:binaryCE} or \eqref{eq:multiCE}, we have that for any $\theta_L \in \bTheta$ 
\[
\EE_{X,Y}[\nabla_{\theta_L} \Lc(\theta_L)]=F(\theta_L),
\]
where the monotone operator $F(\theta_L)$ is defined in \eqref{eq:operatr}, and the expectation is with respect to randomness in both input $X$ and output $Y$.
\end{proposition}

\section{Extension: Training  multiple last layers}\label{sec:heuristic}

Now we look into generalizing the last one-layer training by considering training the last few layers of a network, from layer $L'$ to $L$, $1\leq L'\leq L$, and typically the number of trained layers $L-L'\geq 1$ is small. We will show how to construct the vector field based on \eqref{eq:operatr}. Specifically, we aim to design $F_l(\theta_l)$, $l = L', \ldots, L$, for parameters $\theta_l$ in layer $l$ of the neural network, similar to the scheme before, and use these $F_l(\theta_l)$ as update directions for the parameters $\{\theta_l\}_{l=L'}^L$ in training by \eqref{eq:VI_training}.

\begin{figure}[!b]
    \centering
\includegraphics[width=.8\linewidth]{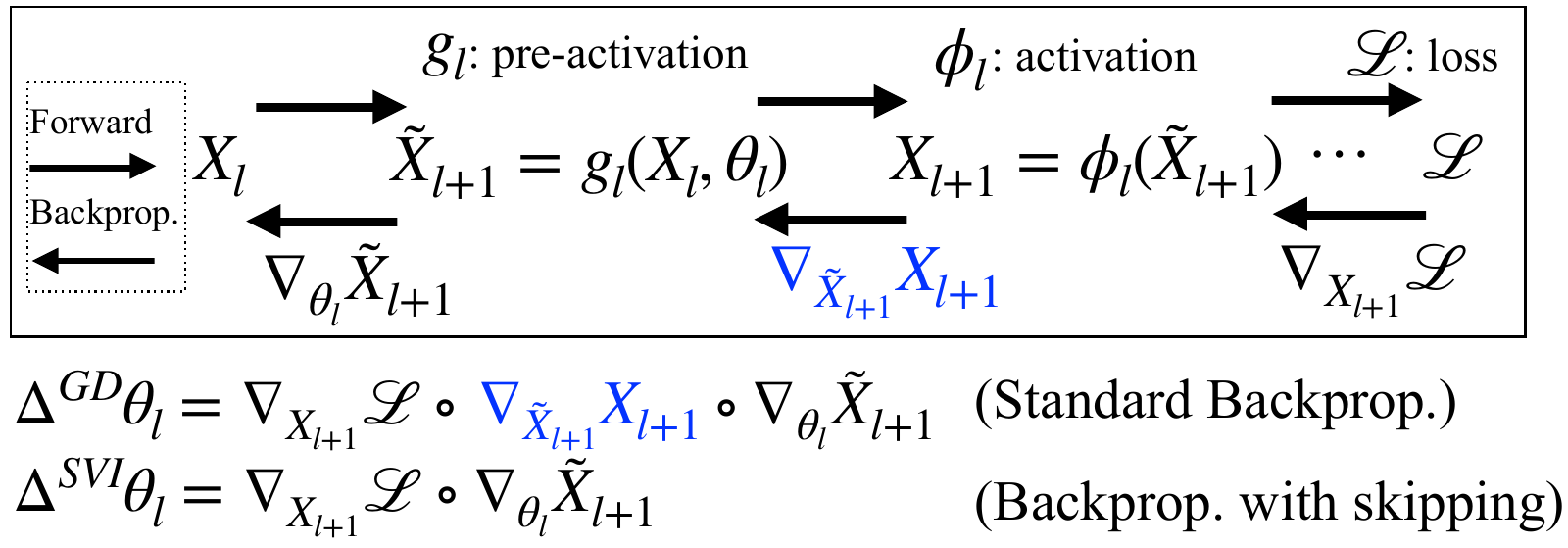}
    \caption{Gradient descent (GD) vs. \texttt{SVI} for fine-tuning an arbitrary layer $l$, $L'\leq l\leq L$, via {\it backward unrolling scheme}: the difference lies in the skipping of differentiation of the activation $\phi$ with respect to pre-activation values. The notation $\circ$ denotes function composition.}
    \label{illustrate}
\end{figure}

However, there are several difficulties in directly extending $F(\theta)$ in \eqref{eq:operatr} to train multi-layer neural networks. First; the definition assumes that the pre-activation mapping $\eta(X)\theta$ is linear with respect to $\theta$. This linearity assumption does not hold for many types of network layers. Second, more importantly, the monotone operator \eqref{eq:operatr} is defined using $Y$, the observation of the response variable. However, such observation is unavailable when we define $F_l(\theta_l)$ for intermediate layers, where $Y_l$ would denote the ``true'' value to be approximated by outputs from the $l$-th hidden layer. Third, how to generalize \eqref{eq:operatr} for different loss objectives is unclear. Thus, we will build a heuristic proven to be good in practice.

\subsection{Revisit single-layer training}

To motivate the monotone VI-inspired scheme designed for multiple last-layer training, we first present a mathematically equivalent view of $F(\theta)$ in \eqref{eq:operatr} for single-layer training. Consider for the last layer of the neural network, the linear pre-activation mapping for input $\eta(X)$ (feature mapped from the previous layers), with parameter $\theta$  
\[\tilde{X}(\theta)=\eta(X)\theta.\] Note that the $\tilde X(\theta)$ does not need to depend on $\theta$ linearly, although in most neural networks the dependence is linear. The corresponding prediction is given by $\hat{Y}=\phi (\tilde{X})$, and consider the mean-squared-error (MSE) loss:
\begin{equation}\label{L_def}
\mathcal{L}(\hat{Y},Y)=\frac 1 2\|\hat{Y}-Y\|^2_2. 
\end{equation}
Recall, by our construction, the vector field can also be written as 
\begin{align}
    F(\theta):
    & =\EEXY{\etaT{X}[\hat Y-Y]} \nonumber \\ 
    & = \EEXY{\nabla_{\hat{Y}} \mathcal{L} \circ \nabla_{\theta} \tilde{X}(\theta)}, \label{equiv}
\end{align}
where the last equality is due to $\nabla_{\hat{Y}}\mathcal{L} = (\hat Y - Y)$, $\nabla_{\theta} \tilde{X}(\theta) = \etaT{X}$.
Thus, this observation says that the monotone operator vector field can be constructed as the product of two terms: where the first term in \eqref{equiv}, $\nabla_{\hat{Y}} \mathcal{L}$, the gradient of the loss objective with respect to the network prediction $\hat{Y}$ says the sensitivity of the loss regarding the network prediction $\hat Y$, and the second term $\nabla_{\theta} \tilde{X}$ is the gradient of the \textit{pre-activation} mapping $\tilde{X}$ with respect to the parameter $\theta$, which due to linearity is the input of the network. 

\subsection{{\it Backward unrolling scheme} for multi-layer training}

Based on this observation from \eqref{equiv}, we propose \SVI{} for multiple layer training based on a {\it backward unrolling scheme} as in Algorithm \ref{alg:SVI} and illustrated in Figure \ref{illustrate}. For notational simplicity, we present the scheme for a single data point $(X, Y)$, and the actual algorithm can use the average over multiple samples:  
\begin{itemize}
\item For a $L$-layer network, to tune the last layer, since 
\[
\mathcal L = \frac 1 2 \|\underbrace{\phi_L \circ g_L (X_{L})}_{\hat Y} -Y\|_2^2,
\] we can directly use \eqref{equiv} by treating the input of last layer as $\eta(X) =X_L$: the mapping of original input $X$ into features using all first $L-1$ layers, and $\hat Y$ is the prediction by forward predict passing the input $\eta(X)$ through the last layer. 

\item Now, to train the $L-1$ layer's weights while fixing the $L$th layer, we note that the loss can be written as 
\[
\mathcal L = \frac 1 2 \|\phi_L \circ g_L \circ \underbrace{\phi_{L-1} \circ g_{L-1} (X_{L-1})}_{X_L} -Y\|_2^2
\]
where we can use $\eta(X) = X_{L-1}$ as the features learned by the previous $L-2$ layers, and use the sensitivity of the loss with respect to the output of the $(L-1)$th layer, $X_L$, in the construction of the monotone operator vector field, $\nabla_{X_L} \mathcal{L} = \nabla_z \left( \frac 1 2 \|\phi_L \circ g_L (z)\|^2_2\right)$.

\item In general, to train layers from $l$, $L' \leq l \leq L$, we notice that
\[
\mathcal L = \frac 1 2 \|\phi_L \circ g_L \cdots \circ \phi_{l} \circ g_{l} (X_{l}) -Y\|_2^2,
\]
using the idea to construct the monotone VI vector field by using $\eta(X) = X_L$, we can use \eqref{equiv} with $\eta(X) = X_{l}$, and 
\begin{equation}\label{loss_grad}
    \nabla_{X_{l+1}}\mathcal L = \nabla_z \left(\frac 1 2 \|\phi_L \circ g_L \cdots \circ \phi_{l+1} \circ g_{l+1}  (z)\|_2^2\right)
\end{equation}
\end{itemize} 
Note that compared to the commonly used gradient $\nabla_{\theta} \mathcal{L}$ in backpropagation, Algorithm \ref{alg:SVI} only differs by \textit{skipping} the derivative of the point-wise non-linearity $\phi$ with respect to its input. Therefore, \SVI{} has a similar computational cost against gradient-based methods. In practice, this skipping leads to different dynamics when updating neurons in the network; see Remark \ref{remark:dynamic} for details.

\subsection{Practical considerations}

The approach taken in Algorithm \ref{alg:SVI} has two main benefits. First, it addresses the challenges of extending monotone VI to multi-layer training. More precisely, the Algorithm applies to arbitrary forms of network layers $g_l(X_l,\theta_l)$, nonlinear activations $\phi_l$, and the loss function $\mathcal{L}$. It also requires no observation of responses from neurons in hidden layers. 
Second, it is easy to implement by leveraging automatic differentiation \citep{paszke2017automatic}. Specifically, we implement the skipping idea via backpropagating the layer-wise surrogate loss $\widetilde{\mathcal{L}}_l$ with respect to $\theta_l$. Note that this loss $\widetilde{\mathcal{L}}_l$ is simple to compute: the quantity $\tilde{X}_{l+1}$ is available during the forward pass on training data $X$, and gradients $\nabla_{X_{l+1}} \mathcal{L}$ are available upon backpropagating the original loss $\mathcal{L}$ with respect to outputs of each layer $l$. 

\begin{remark}[Effect on training dynamics] \label{remark:dynamic}
We remark on a key difference between parameter update in SGD and in \SVI, which ultimately affects training dynamics. Suppose the activation function is ReLU. It is well-known that SGD does not update weights of \textit{inactive neurons} (i.e., ReLU$(x)=0$) because the gradient of ReLU with respect to them is zero. However, \SVI{} does not discriminate between active and inactive neurons as it \textit{skips} this derivative computation. Thus, one can expect that \SVI{} results in a more significant weight update than SGD, which experimentally seems to speed up the initial model convergence. We illustrate this phenomenon in Figure \ref{fig:dynamics}. 
\end{remark}

\begin{remark}[Implementation caveats]
We have two remarks that help avoid incorrect implementation when \SVI{} is implemented in existing software (e.g., PyTorch \cite{NEURIPS2019_9015}). 
First, before obtaining $\{\nabla_{X_{l+1}} \mathcal{L}\}_{l=1}^L$ in line 5, model parameters $\{\theta_l\}$ should not require gradients. Otherwise, the backpropagation of the original loss to obtain $\{\nabla_{X_{l+1}} \mathcal{L}\}_{l=1}^L$ would also store gradients of $\mathcal{L}$ with respect to $\theta$ before the \SVI{} updates. 
Second, after obtaining $F_l^{(i)}(\theta_l)$ in line 8, $\theta_l$ should not require gradient (until the next for loop in line 6 is called). Otherwise, because $\tilde{X}_{l+2}$ as the output by the $(l+1)$-th layer implicitly depends on $\theta_l$, backpropagating the surrogate loss $\widetilde{\mathcal{L}}$ with respect to model parameters would incorrectly accumulate update directions for $\theta_l$.
\end{remark}

\begin{remark}[Extension to GNN] Our approach can also be extended for special neural network architectures, such as GNN and CNN. Below we remark on how to extend to GNN. Suppose we have an undirected and connected graph $\G=(\V,\E, W)$, where $\V$ is a finite set of $n$ vertices, $\E$ is a set of edges, and $W\in\R^{n\times n}$ is a weighted adjacency matrix that encodes node connections. Let $I_n$ denote an identity matrix of size $n$. Let $D$ be the degree matrix of $W$ and $L_g=I_{n}-D^{-1 / 2} W D^{-1 / 2}$ be the normalized graph Laplacian, which has the eigen-decomposition $L_g=U\Lambda U^T$. For a graph signal $X \in \R^{n \times d}$ with $d$ input channels, it is then filtered via pre-activation function in the form of $g_l(L_g, \theta_l)$ which acts on $L_g$ with channel-mixing parameters $\theta_l \in \R^{d\times k}$ for $k$ output channels. Thus, the filtered signal in the $l$-th layer is given by $X'=g_l(L_g, \theta_l)X$. 
It is common that $g_l(L_g, \theta_l)X=\eta(X)\theta_l$, where $\eta(X)=\sum_{r=1}^R h_r(L_g)X$ 
is the sum of $R$ fixed graph filters determined by graph Laplacian $L_g$ \citep{GCN,chebnet,GraphSAGE}.  
\end{remark}

\begin{algorithm}[!t]
\cprotect \caption{Stochastic variational inequality (\SVI) with backward unrolling scheme}
\label{alg:SVI}
\begin{algorithmic}[1]
\REQUIRE{ 
{
(a) Training data $\{(X^{(i)},Y^{(i)})\}_{i=1}^N$,
(b) $L$-layer network $f(X,\theta)=\{\phi_l\circ g_l(X_l,\theta_l)\}_{l=1}^L$,
(c) Loss function $\mathcal{L}$, 
(d) Learning rate $\gamma > 0$, (e) desired $L' \in [1, L-1]$
(f) Number of steps $S$
}}
\ENSURE{Trained model parameters $\hat{\theta}=\{\hat{\theta}_l\}_{l=L'}^L$}
\STATE Initialize the network with $\hat{\theta}$
\FOR{update step $s=1,\ldots,S$}
\FOR{each training sample $i=1,\ldots,N$}
\STATE Store $\{(\tilde{X}_{l+1}, X_{l+1})\}_{l=L'}^L$  with input $X^{(i)}$:  
$\tilde{X}_{l+1}=g_l(X_l,\hat{\theta}_l)$ and $X_{l+1} = \phi_l(\tilde{X}_{l+1})$
\STATE Obtain $\{\nabla_{X_{l+1}} \mathcal{L}\}_{l=L'}^L$  defined in \eqref{loss_grad}
\FOR{Layer $l=L',\ldots,L$}
\STATE Compute the surrogate loss $\widetilde{\mathcal{L}}_l= \langle \nabla_{X_{l+1}} \mathcal{L},\tilde{X}_{l+1}\rangle$
\STATE Obtain $F^{(i)}_l(\theta_l)=\nabla_{X_{l+1}} \mathcal{L} \circ \nabla_{\theta_l} \tilde{X}_{l+1}=\nabla_{\theta_l} \widetilde{\mathcal{L}}_l$
\ENDFOR
\ENDFOR
\STATE Update $\hat{\theta}_l = \hat{\theta}_l- \gamma \left(\frac{1}{N}\sum_{i=1}^N F^{(i)}_l(\theta_l)|_{\theta_l = \hat{\theta}_l}\right)$ for $ l=L',\ldots,L$
\ENDFOR
\end{algorithmic}
\end{algorithm}

\section{Experiments}\label{sec:expr_main}

We test and compare \SVI{} in Algorithm \ref{alg:SVI} with SGD on several synthetic and real-data experiments, where the networks vary in width and depth. We aim to demonstrate the benefits of \SVI{} in terms of reaching faster convergence and competitive/better final performance\footnote{The code is available at \url{https://github.com/hamrel-cxu/SVI-NN-training}.}.

\subsection{Result summary}

Below is a summary of the experiments we performed in this section.

\vspace{0.1in}
\noindent \textit{One-layer networks} (Sec. \ref{sec:one_layer}): We provided theoretical guarantees in Section \ref{sec:guarantee}. When data are generated from a non-convex probit model, \SVI{} outperforms SGD with smaller test losses and higher prediction accuracies.

\vspace{0.1in}
\noindent \textit{Two-layer networks} (Sec. \ref{sec:two_layer}): \SVI{} reaches smaller losses and errors than SGD throughout training epochs. Specifically, we observe performance gains on graphs with varying sizes (i.e., the number of graph nodes ranges from 15 to 600).

\vspace{0.1in}
\noindent \textit{Three-layer networks} (Sec. \ref{sec:solar} and \ref{sec:traffic}): On real solar and traffic data, \SVI{} almost always reaches smaller classification error and higher weighted $F_1$ scores than SGD, in addition to faster initial convergence by \SVI{}. 

\vspace{0.1in}
\noindent \textit{Networks with more than three layers, using CNN and GNN} (Sec. \ref{sec:ogb} and \ref{sec:img}): We compare \SVI{} and SGD on node classification for large graphs and image classification. The node classification task uses one large-scale realistic node classification dataset from the Open Graph Benchmark \citep{hu2020ogb}. The image classification uses the MNIST \citep{MNIST} and CIFAR-10 \citep{CIFAR10} datasets. In particular, the results show that \SVI{} yields improved efficiency during the initial stages of training and reaches competitive overall training performances.

\subsection{Setup and comparison metrics}

\noindent \textit{Setup.} All implementation are done using \texttt{PyTorch} \citep{NEURIPS2019_9015} and \texttt{PyTorch Geometric} \citep{Fey/Lenssen/2019} (for GNN). To ensure a fair comparison, we carefully describe the experiment setup. In particular, the following inputs are \textit{identical} to both \SVI{} and SGD in each experiment. 
\begin{itemize}[noitemsep,topsep=0em]
    \item Data: (a) the size of training and test data (b) batch (batch size and samples in mini-batches).
    \item Model: (a) architecture (e.g., layer choice, activation function, hidden neurons) (b) loss function.
    \item Training regime: (a) parameter initialization (b) hyperparameters for backpropagation (e.g., learning rate) (c) total number of epochs.
\end{itemize}
Thus, all except the update directions for parameters are kept the same for a fair comparison: our \SVI{} uses $F_l(\theta_l)$ in Algorithm \ref{alg:SVI} and SGD uses the gradient of the loss with respect to parameters.

\vspace{0.1in}
\noindent \textit{Comparison metrics.} Consider inputs $X\in \R^{n\times d}$, where $n$ is the number of graph nodes (when the data are vectors, we have $n=1$) and $d$ is the feature dimension per node. Let the true (or predicted) model be $\EE[Y|X,\theta]\in \R^{n\times k}$ (or $\EE[Y|X,\widehat \theta]$), where $k$ is the output dimension per node. Given $N$ samples $\{(X^{(i)},Y^{(i)})\}_{i=1}^N$, we consider the following metrics.
{\small \begin{align}
     \text{MSE loss}&= \frac 1N \sum_{i=1}^N \|\EE[Y^{(i)}|X^{(i)},\widehat \theta]-Y^{(i)}\|_2 \label{eq:MSE_loss} \\
     \text{Cross-entropy loss}&= \frac 1N \sum_{i=1}^N \langle Y^{(i)}, -\log(\EE[Y^{(i)}|X^{(i)},\widehat \theta]) \rangle \label{eq:CE_loss}\\
     \text{Classification error}&=\frac{1}{n N k}\sum_{i=1}^N\sum_{j=1}^n \sum_{f=1}^k \textbf{1}(Y^{(i)}_{j,f}\neq \hat Y^{(i)}_{j,f}) \label{eq:classification_error} \\
     \ell_2\text{ model recovery error}&= \frac 1N \sum_{i=1}^N\|\EE[Y^{(i)}|X^{(i)},\widehat \theta]-\EE[Y^{(i)}|X^{(i)},\theta]\|_2 \label{eq:l_2_model_err}.
\end{align}}

In addition, all results are averaged over three random trials, where networks are re-initialized in each trial. In simulation, we also redraw training samples in each trial. We show standard errors in tables as brackets and plots as error bars. 

Notation-wise, $X\sim \mathcal N(a,b)$ means the random variable $X$ follows a normal distribution with mean $a$ and variance $b^2$; $N$ (resp. $N_1$) denotes the size of training (resp. test) sample, $\textsf{lr}$ denotes the learning rate, $B$ denotes the batch size, and $E$ denotes training epochs.

\begin{figure}[!t]
    \centering
    \begin{minipage}{0.49\textwidth}
        \includegraphics[width=\linewidth]{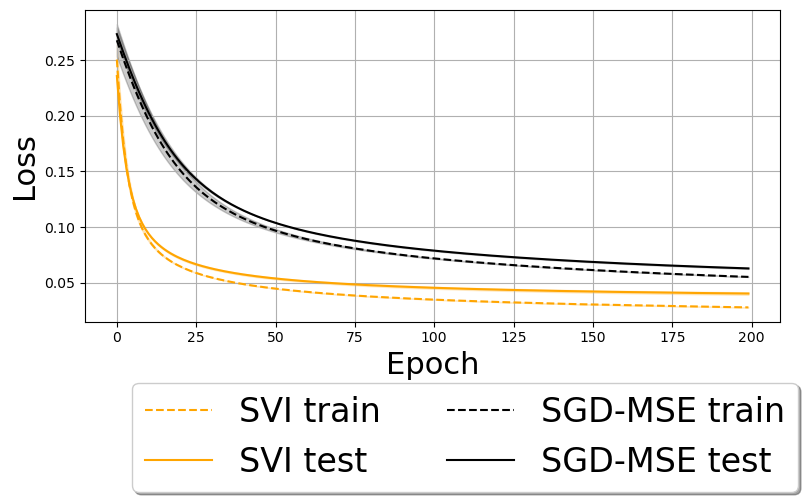}
        \subcaption{Losses under MSE objective}
    \end{minipage}
    \begin{minipage}{0.49\textwidth}
        \includegraphics[width=\linewidth]{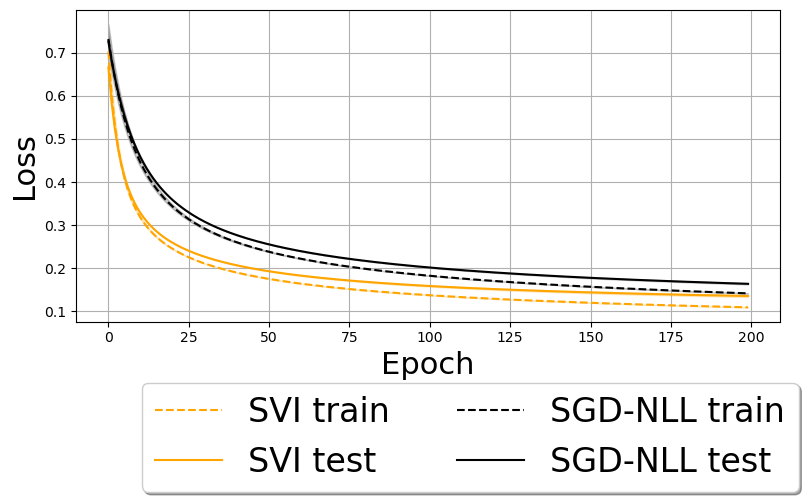}
        \subcaption{Losses under NLL objective}
    \end{minipage}
    \begin{minipage}{0.55\textwidth}
        \includegraphics[width=\linewidth]{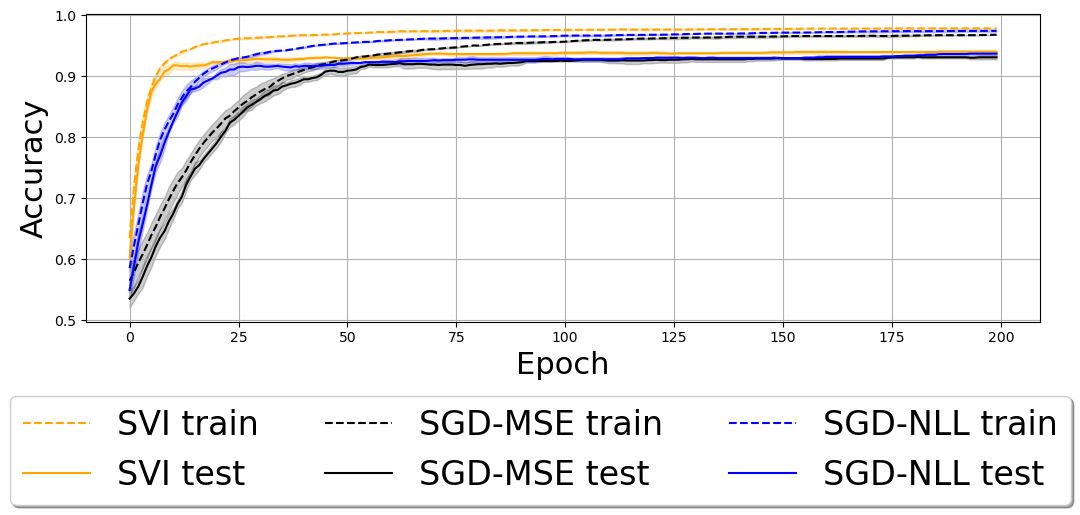}
        \subcaption{Accuracies on the same test data}
    \end{minipage}
    \cprotect \caption{One-layer fully-connected network. In (a) and (b), we plot training and test losses for both SGD (black) and \SVI{} (orange). The loss choice for SGD is the mean-squared error (MSE) in (a) and the negative log-likelihood (NLL) in (b). The \SVI{} update for one-layer training is based on sample version of \eqref{eq:operatr}, which does not depend on the loss objective; thus, we compute the MSE or NLL losses for \SVI{} for comparison. In (c), we show the binary classification accuracies by \SVI{} in orange, SGD by MSE (SGD-MSE) in black, and SGD by NLL (SGD-NLL) in blue.}
    \label{fig:one/last-layer}
\end{figure}

\subsection{Synthetic data experiments}

\begin{figure}[!t]
    \centering
    \includegraphics[width=\linewidth]{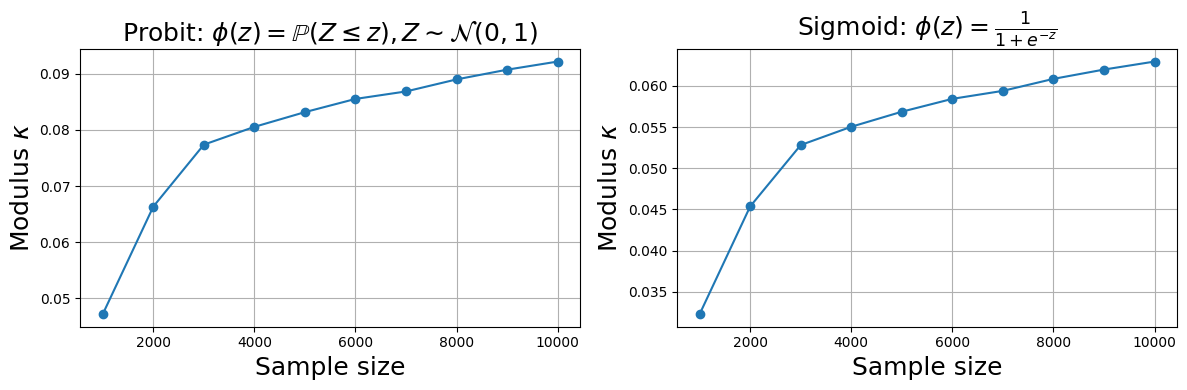}
    \caption{Modulus $\kappa$ evaluated on training data for the one-layer network training in Section \ref{sec:one_layer}. We vary the activation function $\phi$ and plot the corresponding $\kappa$ values over increasing number of training samples.}
    \label{fig:kappa_val}
\end{figure}

\vspace{0.1in}

\subsubsection{One-layer network training}\label{sec:one_layer}
We first consider data generated from a one-layer probit model, which leads to non-convex objectives. Specifically, for each $i\geq 1$, let $X^{(i)} \in \R^p, Y^{(i)} \in \{0,1\}$ and 
\[\condexp{Y^{(i)}}{X^{(i)}}=\Phi(\beta^T X^{(i)}+b),\]
where $\Phi(z)=\PP(Z\leq z)$ for $Z\sim \mathcal N(0,1)$.
Let the subscript $j$ denotes the $j$-th entry. We then sample $X^{(i)}_j$ $\overset{i.i.d.}{\sim} \mathcal N(0.05,1)$, $\beta_j \overset{i.i.d.}{\sim} \mathcal N(-0.05,1)$, and $b \sim \mathcal N(-0.1,1)$. 
We let $N=2000$, $N_1=500$, and use a fully-connected one-layer network. We further let $B=200$ and $E=200$ and use $\textsf{lr}=0.005$.

To train \SVI{} and SGD, we use both the MSE objective and the negative log-likelihood (NLL) objectives. Note that in this one-layer setting, \SVI{} provides monotone operators as parameter update directions that are independent of the training objective (recall the definition in \eqref{eq:operatr}). Figure \ref{fig:one/last-layer} shows the training and test performances by both methods. On test data, \SVI{} consistently yields smaller losses and higher prediction accuracies than SGD under both MSE and NLL objectives during training. This shows the effectiveness of \SVI{} in cases with theoretical guarantees. Figure \ref{fig:kappa_val} additionally visualizes the modulus $\kappa$ as a function of the sample size, where we see $\kappa$ is always positive (so the operator is strongly monotone) and increases as sample sizes grow.

\vspace{0.1in}

\begin{table}[!t]
\caption{Two-layer GNN model on random graphs with an increasing number of graph nodes. We show the training and test $\ell_2$ error as defined in \eqref{eq:l_2_model_err} along the training epochs. Entries in brackets indicate standard deviation over 3 independent initialization of model parameters.}
\label{tab:more_nodes}
\begin{minipage}{0.49\textwidth}
    \centering
    \subcaption{Epoch 50}
    \resizebox{\linewidth}{!}{
    \begin{tabular}{P{1.6cm}|P{1.6cm}P{1.6cm}P{1.6cm}P{1.6cm}}
    \toprule
    \# nodes & SGD train & SVI train & SGD test & SVI test \\
    \midrule
    15  &  0.110 (2.01e-2) &  0.104 (1.72e-2) &  0.107 (1.63e-2) &  0.101 (1.34e-2) \\
    40  &  0.102 (1.60e-2) &  0.096 (1.39e-2) &  0.101 (1.60e-2) &  0.095 (1.38e-2) \\
    100 &  0.092 (1.27e-2) &  0.087 (1.13e-2) &  0.093 (1.37e-2) &  0.088 (1.22e-2) \\
    300 &  0.080 (9.48e-3) &  0.077 (8.80e-3) &  0.081 (1.04e-2) &  0.077 (9.68e-3) \\
    600 &  0.073 (7.63e-3) &  0.070 (7.34e-3) &  0.074 (8.32e-3) &  0.071 (7.98e-3) \\
    \bottomrule
    \end{tabular}}
\end{minipage}
\begin{minipage}{0.49\textwidth}
    \centering
    \subcaption{Epoch 100}
    \resizebox{\linewidth}{!}{
    \begin{tabular}{P{1.6cm}|P{1.6cm}P{1.6cm}P{1.6cm}P{1.6cm}}
    \toprule
    \# nodes & SGD train & SVI train & SGD test & SVI test \\
    \midrule
    15  &  0.099 (1.79e-2) &  0.089 (1.21e-2) &  0.097 (1.46e-2) &  0.087 (9.09e-3) \\
    40  &  0.092 (1.45e-2) &  0.083 (1.04e-2) &  0.092 (1.47e-2) &  0.083 (1.05e-2) \\
    100 &  0.085 (1.18e-2) &  0.078 (9.29e-3) &  0.086 (1.30e-2) &  0.079 (1.03e-2) \\
    300 &  0.076 (9.09e-3) &  0.070 (7.86e-3) &  0.077 (1.01e-2) &  0.071 (8.78e-3) \\
    600 &  0.070 (7.43e-3) &  0.064 (6.86e-3) &  0.070 (8.14e-3) &  0.065 (7.54e-3) \\
    \bottomrule
    \end{tabular}
    }
\end{minipage}
\begin{minipage}{\textwidth}
    \centering
    \subcaption{Epoch 200}
    \resizebox{0.49\textwidth}{!}{
    \begin{tabular}{P{1.6cm}|P{1.6cm}P{1.6cm}P{1.6cm}P{1.6cm}}
    \toprule
    \# nodes & SGD train & SVI train & SGD test & SVI test \\
    \midrule
    15  &  0.084 (1.43e-2) &  0.073 (5.81e-3) &  0.082 (1.17e-2) &  0.071 (4.11e-3) \\
    40  &  0.079 (1.21e-2) &  0.069 (5.76e-3) &  0.079 (1.26e-2) &  0.068 (6.20e-3) \\
    100 &  0.075 (1.06e-2) &  0.066 (6.55e-3) &  0.077 (1.19e-2) &  0.067 (7.75e-3) \\
    300 &  0.069 (8.61e-3) &  0.060 (6.50e-3) &  0.070 (9.67e-3) &  0.061 (7.49e-3) \\
    600 &  0.064 (7.23e-3) &  0.056 (6.05e-3) &  0.065 (7.97e-3) &  0.056 (6.80e-3) \\
    \bottomrule
    \end{tabular}
    }
\end{minipage}
\end{table}

\begin{figure}[!b]
    \centering
    \includegraphics[width=\textwidth]{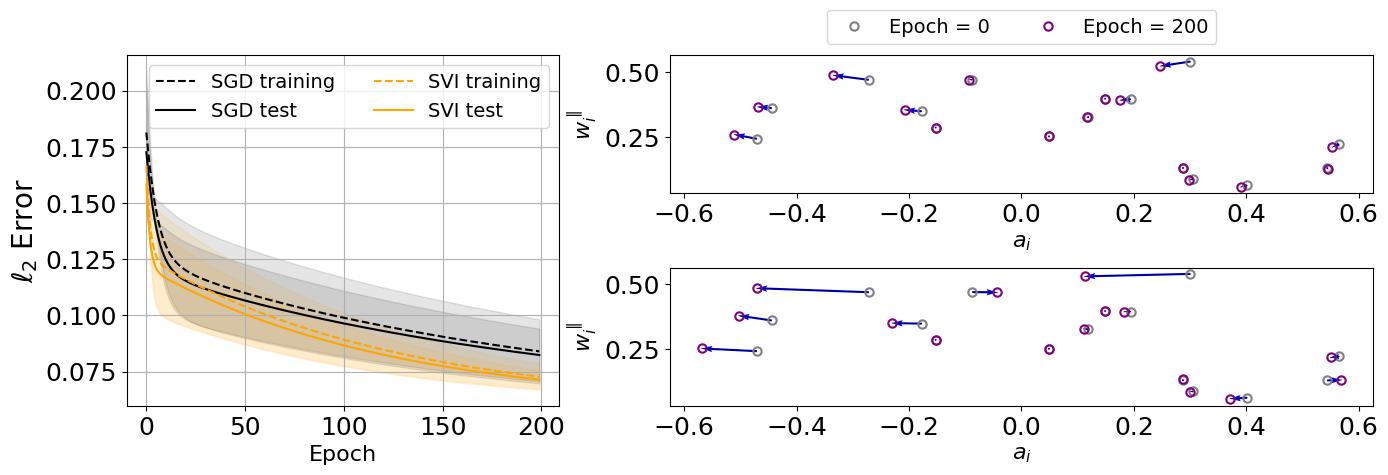}
\cprotect \caption{Two-layer GNN prediction error and neuron dynamics visualization. Left: $\ell_2$ error in \eqref{eq:l_2_model_err}. Right: visualization of the training dynamics by SGD (top) and \SVI{} (bottom).}
\label{fig:dynamics}
\end{figure}

\subsubsection{Two-layer GNN training}\label{sec:two_layer}

Table \ref{tab:more_nodes} shows that \SVI \ consistently reaches smaller $\ell_2$ prediction error for the two-layer GNN, even when the graph is large (600 nodes). This shows the robustness of the proposed \SVI{}, even in this setting beyond our theoretical analyses.
To better understand how \SVI \ update parameters, Fig. \ref{fig:dynamics} zooms in the dynamics for 16 neurons (right figure) and shows the corresponding $\ell_2$ model recovery test error (left figure). Regarding the right figure, we plot the norm of first-layer neuron weights, where the norm is defined in terms of the inner product with initial weights, against the second-layer neuron weights, which are scalars because $k=1$. One circle represents one neuron, with arrows representing the direction of moving along the initial weights. We then connect the initial and final dots to indicate the displacement of neurons. The same visualization techniques are used in \citep{pellegrini2020analytic}. We observe that \SVI \ displaces neurons further after 200 epochs, which is anticipated in Remark \ref{remark:dynamic}. In this case, it can be beneficial due to the faster error convergence.

\begin{table}[!t]
\centering
\caption{Solar ramping event detection under varying sizes of the GNN. Entries in brackets indicate standard deviation over three independent initializations of model parameters.}
\label{tab:solar_tab}
\begin{minipage}{0.49\textwidth}
    \centering
    \subcaption{MSE loss}
    \resizebox{\linewidth}{!}{
    \def\arraystretch{1.25}%
    
    \begin{tabular}{P{1.5cm}|P{1.45cm}P{1.45cm}P{1.45cm}P{1.45cm}}
    \toprule
    {\# hidden neurons} & SGD Training & \SVI{} Training & SGD Test & \SVI{} Test \\
    \midrule
    32 & 0.224 (4.67e-3) & 0.200 (9.95e-4) & 0.223 (2.40e-3) & 0.204 (1.30e-3) \\
    64 & 0.213 (7.66e-4) & 0.191 (8.49e-4) & 0.211 (1.67e-3) & 0.195 (1.12e-3) \\
    128 & 0.219 (1.79e-3) & 0.195 (6.22e-4) & 0.217 (1.13e-3) & 0.199 (5.14e-4) \\
    \bottomrule
    \end{tabular}
    }
\end{minipage}
\begin{minipage}{0.49\textwidth}
    \centering
    \subcaption{Classification error}
    \resizebox{\linewidth}{!}{
    \def\arraystretch{1.25}%
    \begin{tabular}{P{1.5cm}|P{1.45cm}P{1.45cm}P{1.45cm}P{1.45cm}}
    \toprule
    {\# hidden neurons} & SGD Training & \SVI{} Training & SGD Test & \SVI{} Test \\
    \midrule
    32 & 0.297 (5.88e-3) & 0.275 (4.54e-3) & 0.333 (1.28e-2) & 0.296 (1.13e-2) \\
    64 & 0.283 (4.33e-3) & 0.263 (1.59e-3) & 0.308 (1.04e-2) & 0.272 (1.15e-3) \\
    128 & 0.295 (3.15e-3) & 0.267 (2.43e-3) & 0.328 (6.13e-3) & 0.282 (9.04e-4) \\
    \bottomrule
    \end{tabular}
    }
\end{minipage}
\begin{minipage}{\textwidth}
    \centering
    \subcaption{Weighted $F_1$ score}
    \resizebox{0.49\linewidth}{!}{
    \def\arraystretch{1.25}%
    \begin{tabular}{P{1.5cm}|P{1.45cm}P{1.45cm}P{1.45cm}P{1.45cm}}
    \toprule
    {\# hidden neurons} & SGD Training & \SVI{} Training & SGD Test & \SVI{} Test \\
    \midrule
    32 & 0.704 (6.62e-3) & 0.727 (4.36e-3) & 0.659 (1.46e-2) & 0.704 (1.15e-2) \\
    64 & 0.719 (4.57e-3) & 0.737 (1.59e-3) & 0.689 (1.16e-2) & 0.728 (1.14e-3) \\
    128 & 0.706 (3.50e-3) & 0.734 (2.41e-3) & 0.664 (7.73e-3) & 0.718 (9.11e-4) \\
    \bottomrule
    \end{tabular}
    }
\end{minipage}
\end{table}
\begin{figure}[!t]
    \centering
     \begin{minipage}{\textwidth}
     \includegraphics[width=\textwidth]{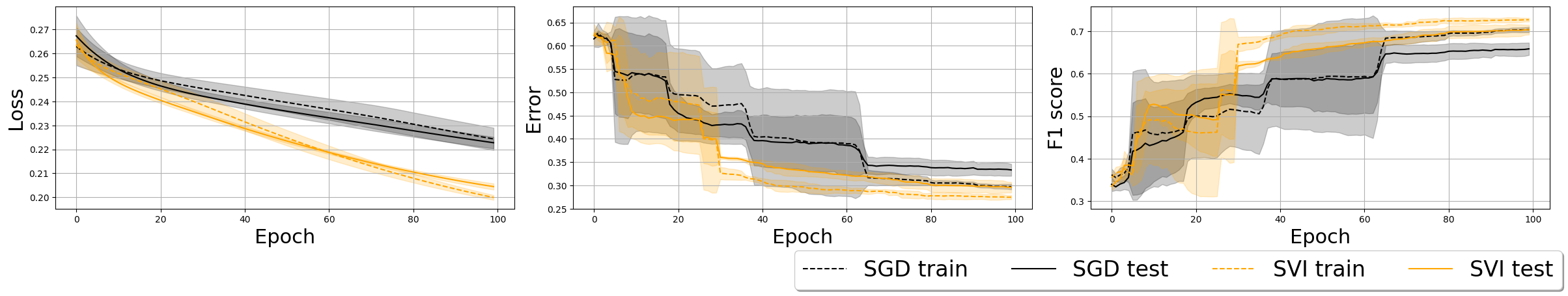}
     \subcaption{32 hidden neurons. Left to right: MSE loss, classification error, and weighted $F_1$ score.}
    \end{minipage}
     \begin{minipage}{\textwidth}
    \centering
     \includegraphics[width=\textwidth]{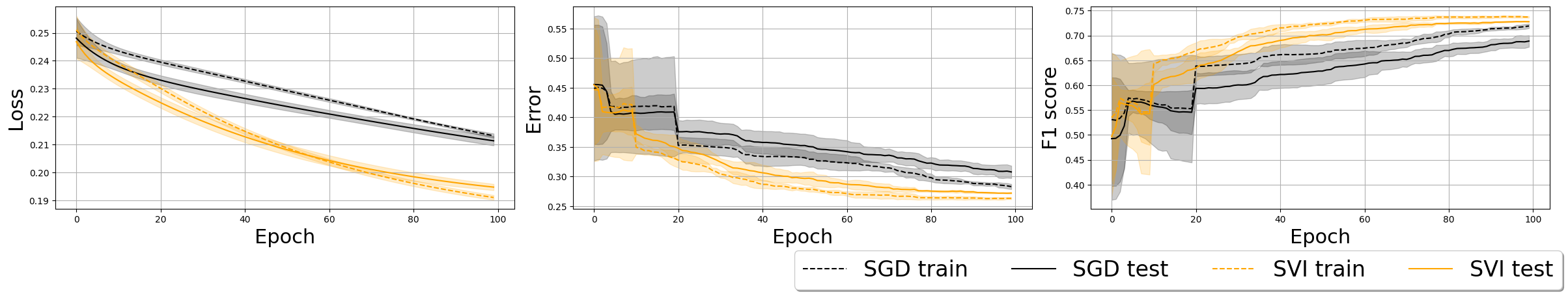}
     \subcaption{64 hidden neurons. Left to right: MSE loss, classification error, and weighted $F_1$ score.}
    \end{minipage}
     \begin{minipage}{\textwidth}
    \centering
     \includegraphics[width=\textwidth]{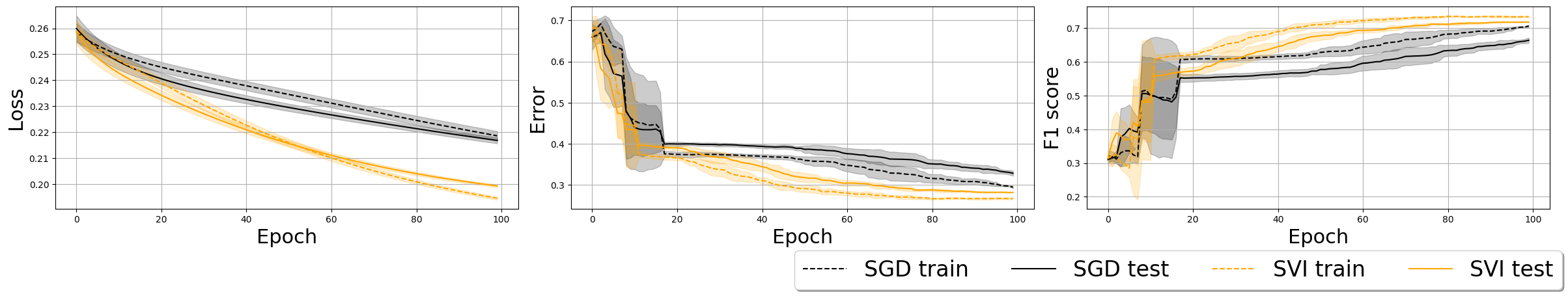}
     \subcaption{128 hidden neurons. Left to right: MSE loss, classification error, and weighted $F_1$ score.}
    \end{minipage}
     \caption{Solar ramping event detection under various hidden neurons. Results are plotted with one standard error bars over three independent trials.}
     \label{fig:append_solar}
\end{figure}

\begin{table}[!t]
\centering
\cprotect \caption{Traffic data multi-class anomaly detection under varying sizes of the GNN. Entries in brackets indicate standard deviation over three independent initializations of model parameters.}
\label{tab:traffic_tab}
\begin{minipage}{0.49\textwidth}
    \centering
    \subcaption{MSE loss}
    \resizebox{\linewidth}{!}{
    \def\arraystretch{1.25}%
    
    \begin{tabular}{P{1.5cm}|P{1.45cm}P{1.45cm}P{1.45cm}P{1.45cm}}
\toprule
{\# hidden neurons} & SGD Training & \SVI{} Training & SGD Test & \SVI{} Test \\
\midrule
32 & 0.529 (2.84e-2) & 0.475 (7.64e-3) & 0.529 (2.51e-2) & 0.477 (5.63e-3) \\
64 & 0.471 (8.25e-3) & 0.457 (6.51e-3) & 0.473 (7.67e-3) & 0.458 (5.19e-3) \\
128 & 0.447 (2.70e-3) & 0.445 (1.84e-3) & 0.448 (2.44e-3) & 0.445 (1.98e-3) \\
\bottomrule
\end{tabular}

    }
\end{minipage}
\begin{minipage}{0.49\textwidth}
    \centering
    \subcaption{Classification error}
    \resizebox{\linewidth}{!}{
    \def\arraystretch{1.25}%
\begin{tabular}{P{1.5cm}|P{1.45cm}P{1.45cm}P{1.45cm}P{1.45cm}}
\toprule
{\# hidden neurons} & SGD Training & \SVI{} Training & SGD Test & \SVI{} Test \\
\midrule
32 & 0.401 (2.49e-2) & 0.367 (1.24e-2) & 0.404 (2.22e-2) & 0.371 (9.59e-3) \\
64 & 0.344 (9.52e-3) & 0.339 (5.86e-3) & 0.349 (1.12e-2) & 0.345 (6.66e-3) \\
128 & 0.334 (1.64e-3) & 0.334 (2.00e-3) & 0.335 (2.03e-3) & 0.334 (3.27e-3) \\
\bottomrule
\end{tabular}

    }
\end{minipage}
\begin{minipage}{\textwidth}
    \centering
    \subcaption{Weighted $F_1$ score}
    \resizebox{0.49\linewidth}{!}{
    \def\arraystretch{1.25}%
\begin{tabular}{P{1.5cm}|P{1.45cm}P{1.45cm}P{1.45cm}P{1.45cm}}
\toprule
{\# hidden neurons} & SGD Training & \SVI{} Training & SGD Test & \SVI{} Test \\
\midrule
32 & 0.594 (2.89e-2) & 0.629 (1.67e-2) & 0.589 (2.76e-2) & 0.626 (1.29e-2) \\
64 & 0.655 (9.91e-3) & 0.660 (6.05e-3) & 0.651 (1.14e-2) & 0.655 (6.71e-3) \\
128 & 0.665 (1.77e-3) & 0.665 (1.97e-3) & 0.664 (2.16e-3) & 0.666 (3.36e-3) \\
\bottomrule
\end{tabular}

    }
\end{minipage}
\end{table}
\begin{figure}[!t]
    \centering
     \begin{minipage}{\textwidth}
     \includegraphics[width=\textwidth]{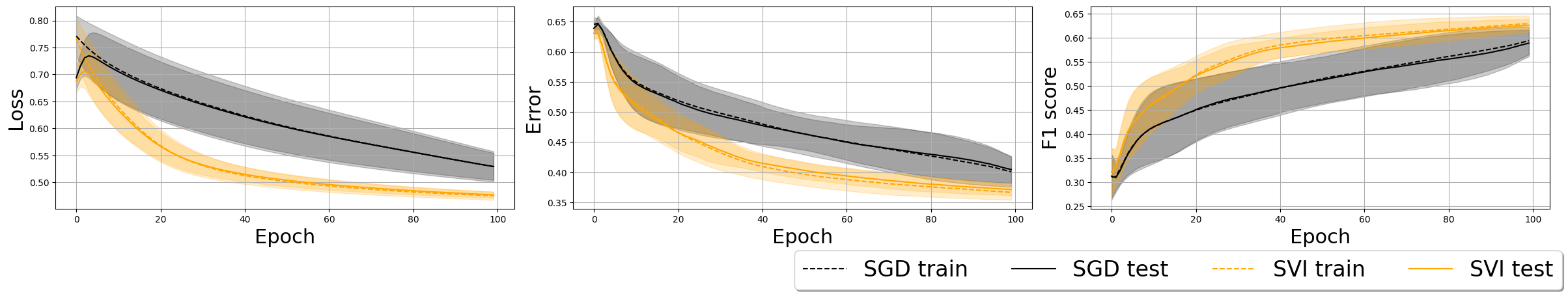}
     \subcaption{32 hidden neurons. Left to right: MSE loss, classification error, and weighted $F_1$ score.}
    \end{minipage}
     \begin{minipage}{\textwidth}
    \centering
     \includegraphics[width=\textwidth]{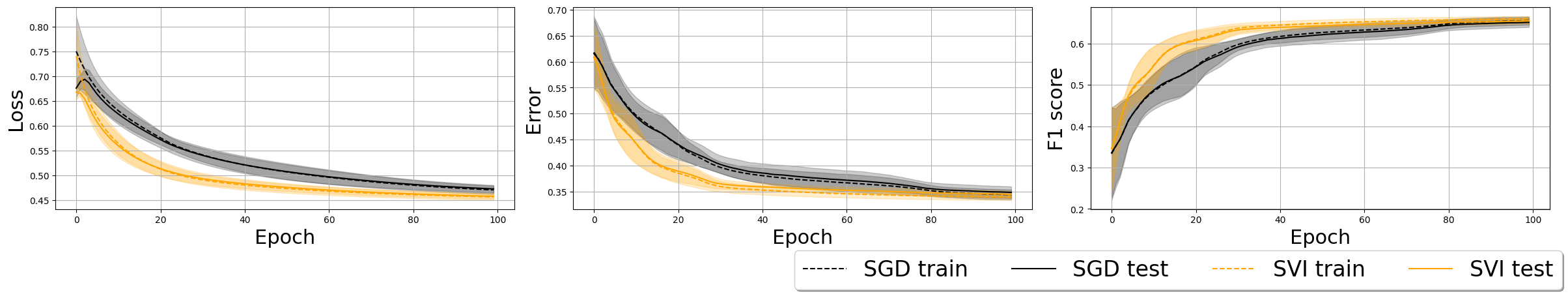}
     \subcaption{64 hidden neurons. Left to right: MSE loss, classification error, and weighted $F_1$ score.}
    \end{minipage}
     \begin{minipage}{\textwidth}
    \centering
     \includegraphics[width=\textwidth]{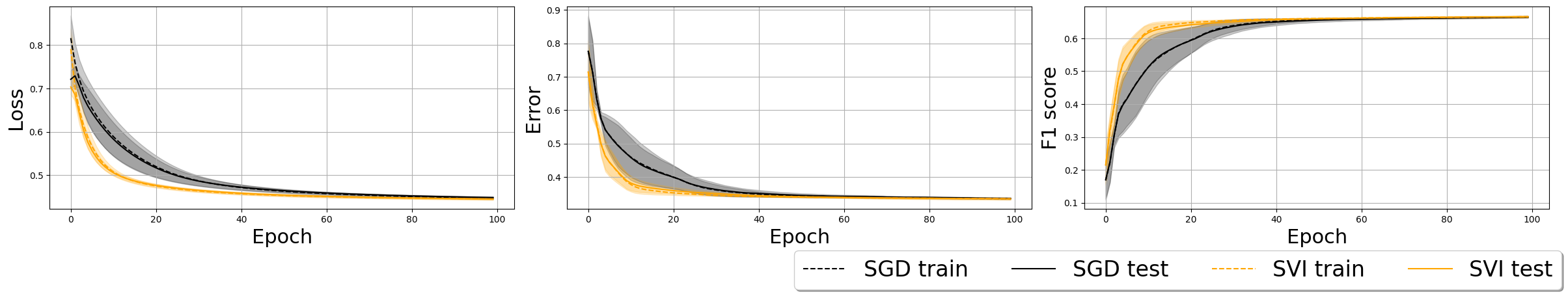}
     \subcaption{128 hidden neurons. Left to right: MSE loss, classification error, and weighted $F_1$ score.}
    \end{minipage}
     \cprotect \caption{Traffic data multi-class anomaly detection under various hidden neurons. Results are plotted with one standard error bar over three independent trials.}
     \label{fig:append_traffic}
\end{figure}

\subsection{Real-data network prediction}\label{sec:real_data}

For real-data experiments, we use larger networks (i.e., three layers or more). Examples include spatial-temporal anomaly detection on network data, graph nodal classification on large graphs, and image classification. The output layer uses sigmoid (binary classification) or softmax (multi-class classification).

\subsubsection{Binary solar ramping prediction}\label{sec:solar} 

The raw solar radiation data are retrieved from the National Solar Radiation Database for 2017 and 2018. We consider data from 10 cities downtown in California and from 9 locations in downtown Los Angeles, where each city or location is a node in the network. In this experiment, the goal is to identify ramping events within a network of solar sensors, where ramping events are defined over abrupt changes in the sensor inputs. Thus, $Y^{(i)}=1$ if node $i$ at day $t$ experiences a ramping event. We define feature $X^{(i)}$ as the collection of past $d$ days of observation and pick $d=5$. We estimate edges via a $k$-nearest neighbor approach based on the correlation between training ramping labels, with $k=4$. Data in 2017 are used for training ($N=360$) and the rest for testing ($N_1=365$), and we let $B=30$, $E=100$, and ${\textsf{lr}}=0.001$.

Table \ref{tab:solar_tab} shows that under varying numbers of hidden nodes in the GNN, \SVI{} consistently reaches lower test classification error and higher test weighted $F_1$ scores; the $F_1$ scores are weighted by support (the number of true instances for each label). Figure \ref{fig:append_solar} shows faster intermediate convergence results by \SVI{} in terms of both metrics.

\begin{table}[!b]
\cprotect \caption{Classification accuracies on the large \texttt{ogbn-arxiv} dataset under varying sizes of the GNN. ``Initial'' (resp. ``Final'') results indicate prediction accuracies after training 100 (resp. 1000) epochs. Entries in brackets show standard deviation over three independent trials.}
\label{ogb_table_main}
\begin{center}
\vspace{-0.25in}
\resizebox{0.7\textwidth}{!}{
\def\arraystretch{1.25}
    \begin{tabular}{P{1.5cm}|P{1.5cm}P{1cm}P{1cm}P{1cm}|P{1cm}P{1cm}P{1cm}}
\toprule
 &  & \multicolumn{3}{c}{SVI} & \multicolumn{3}{c}{SGD} \\
 \# hidden neurons & result type &         Train &         Valid &          Test &         Train &         Valid &          Test  \\
\midrule
\multirow{2}{*}{128} & Initial & 39.64 (1.99) &  39.52 (1.84) &  39.83 (1.95) &  6.95 (4.36) &  7.05 (4.37) &  6.91 (4.31) \\

& Final & 63.55 (0.25) &  63.44 (0.26) &  63.47 (0.23) &  51.62 (2.22) &  51.38 (2.15) &  51.63 (2.29) \\
\hline
\multirow{2}{*}{256} & Initial & 52.02 (0.95) &  51.84 (1.04) &  52.02 (1.01) &  23.38 (3.86) &  23.35 (3.9) &  23.43 (3.86) \\

& Final & 66.56 (0.08) &  66.2 (0.13) &  66.26 (0.09) &  59.24 (1.56) &  59.14 (1.59) &  59.1 (1.48) \\

\hline
\multirow{2}{*}{512} & Initial & 57.88 (0.36) &  57.57 (0.39) &  57.68 (0.42) &  33.55 (2.42) &  33.46 (2.58) &  33.64 (2.46) \\

& Final & 69.12 (0.13) &  68.52 (0.07) &  68.72 (0.07) &  64.28 (0.77) &  63.99 (0.71) &  64.07 (0.88) \\

\bottomrule
\end{tabular}}
\end{center}    
\end{table}
\begin{figure}[h!]
    \centering
    \begin{minipage}{0.49\textwidth}
    \centering
     \includegraphics[width=\linewidth]{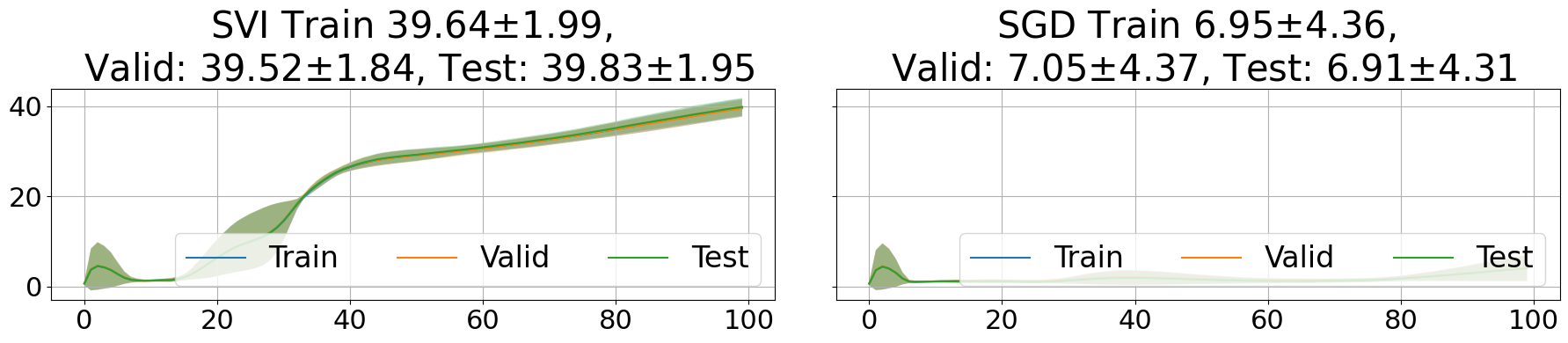}
     \subcaption{128 hidden neuron, initial}
    \end{minipage}
    \begin{minipage}{0.49\textwidth}
    \centering
     \includegraphics[width=\linewidth]{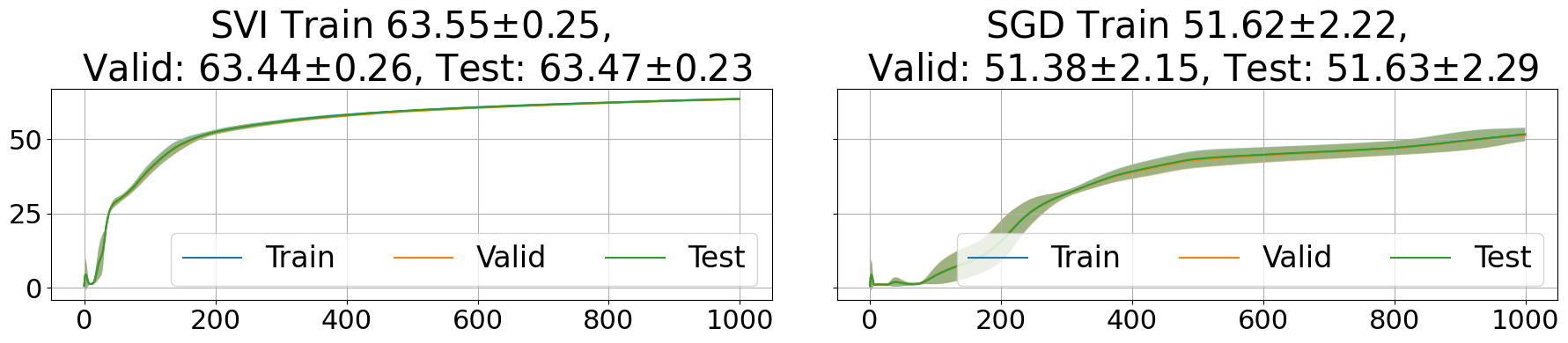}
     \subcaption{128 hidden neuron, full}
    \end{minipage}
    \begin{minipage}{0.49\textwidth}
    \centering
     \includegraphics[width=\linewidth]{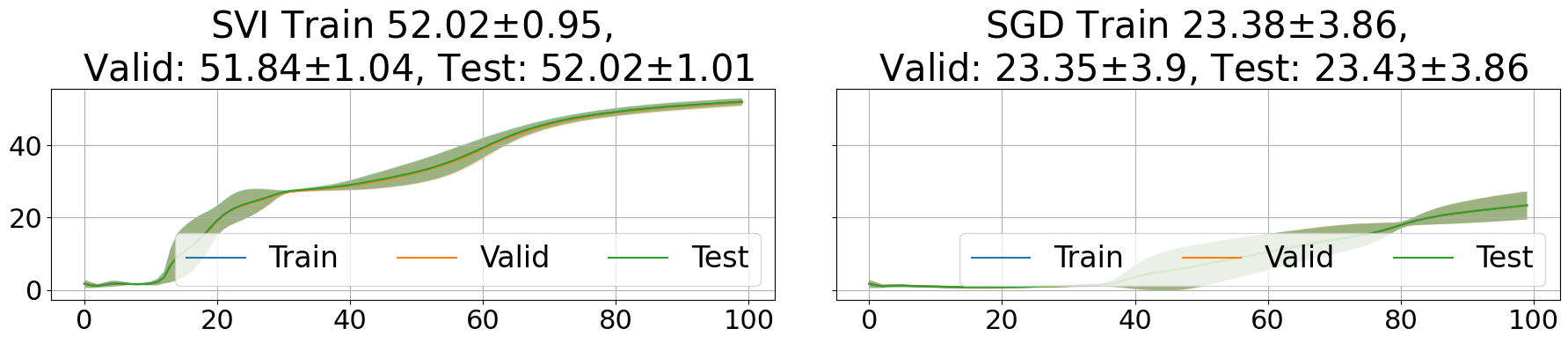}
     \subcaption{256 hidden neuron, initial}
    \end{minipage}
    \begin{minipage}{0.49\textwidth}
    \centering
     \includegraphics[width=\linewidth]{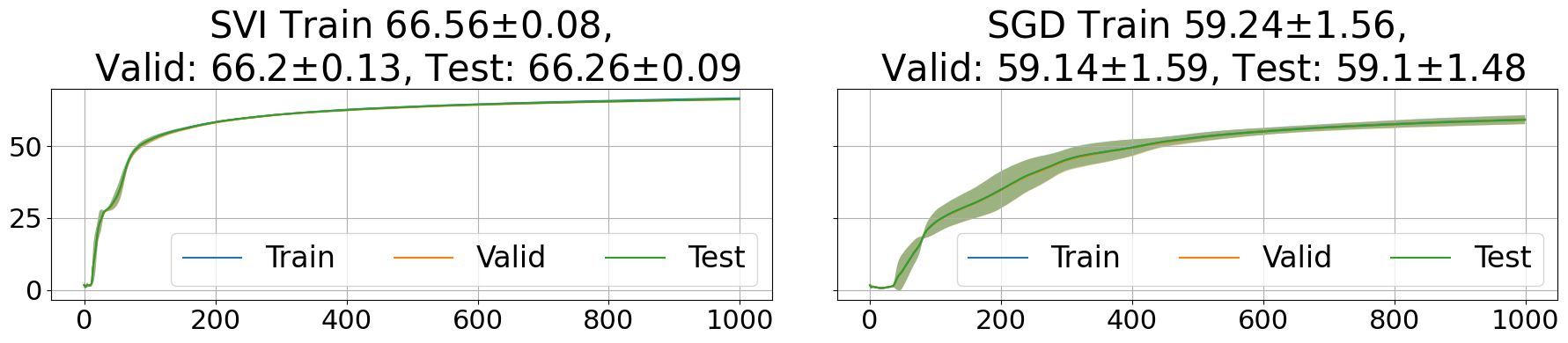}
     \subcaption{256 hidden neuron, full}
    \end{minipage}
    \begin{minipage}{0.49\textwidth}
    \centering
     \includegraphics[width=\linewidth]{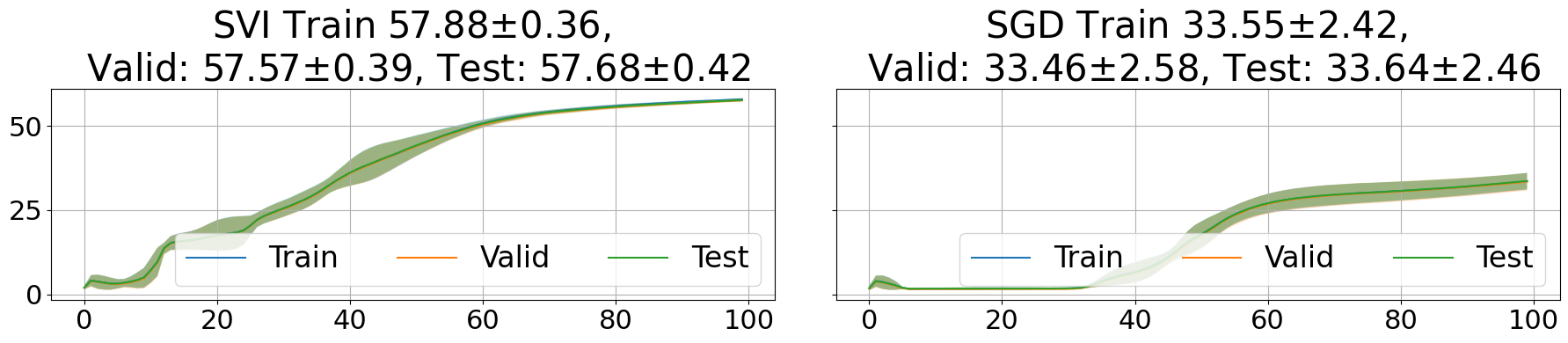}
     \subcaption{512 hidden neuron, initial}
    \end{minipage}
    \begin{minipage}{0.49\textwidth}
    \centering
     \includegraphics[width=\linewidth]{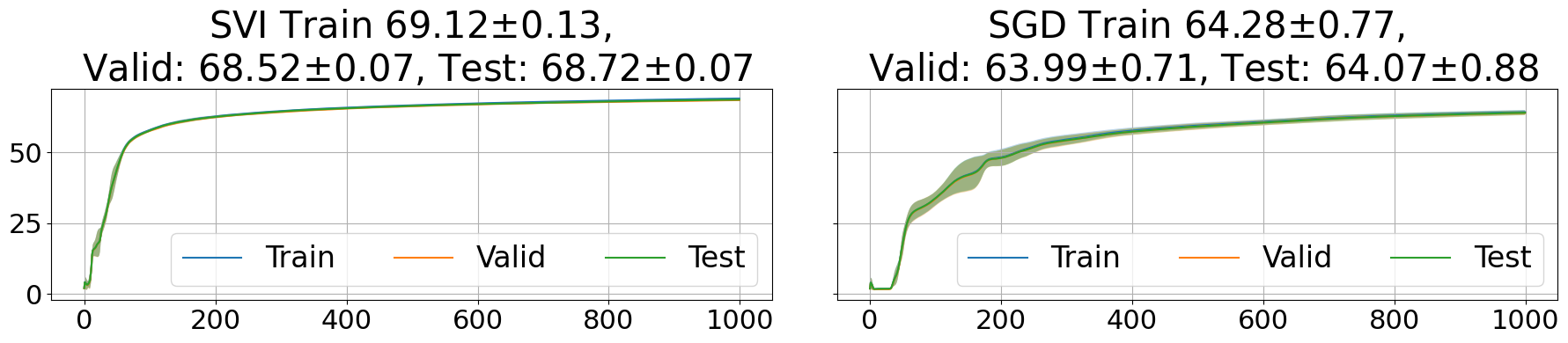}
     \subcaption{512 hidden neuron, full}
    \end{minipage}
     \cprotect \caption{Classification accuracies on the large-scale \texttt{ogbn-arxiv} dataset. We visualize the prediction accuracies during the initial (i.e., 100 epochs) and full (i.e., 1000 epochs) training stages.}
     \label{ogb_fig}
\end{figure}

\vspace{0.1in}

\subsubsection{Multi-class traffic flow anomaly detection}\label{sec:traffic} 

The raw bi-hourly traffic flow data are from the California Department of Transportation, where we collected data from 20 non-uniformly spaced traffic sensors in 2020. This experiment aims to identify multi-class bi-hourly anomalous traffic flow observations in a sensor network. 
Thus, we let $Y^{(i)}=1$ (resp. 2) if the current traffic flow lies outside the upper (resp. lower) 90\% quantile over the past four days of traffic flow of its nearest four neighbors based on sensor proximity. As before, we define feature $X^{(i)}$ as the collection of past $d$ days of observation and set $d=4$, where the edges include the nearest five neighbors based on location. Data in the first nine months are training data (e.g., $N=6138$) and the rest for testing ($N_1=2617$). We let $B=600$, $E=100$, and ${\textsf{lr}}=0.001$.

Table \ref{tab:traffic_tab} shows that \SVI{} consistently reaches lower test classification error and higher test weighted $F_1$ scores. This is consistent with the previous observations and holds with varying sizes of the GNN. Figure \ref{fig:append_traffic} shows faster intermediate convergence results by \SVI{} in terms of both metrics.

\subsubsection{Multi-class large-scale OGB node classification}\label{sec:ogb} 

We further demonstrate the applicability of \SVI \ on the large \texttt{ogbn-arxiv} graph provided by the Open Graph Benchmark (OGB) \citep{hu2020ogb,hu2021ogblsc}. The graph is much larger than earlier examples: graph nodes are papers to be classified, and edges denote citation among papers; it has approximately 170 thousand nodes, 1.16 million edges, 128-dimensional node features, and 40 node classes. These numbers are significantly larger than those in earlier examples. We train four-layer GNN models with varying numbers of hidden nodes, where these GNN models are wider and deeper than the earlier ones we trained. We train for fixed $E=1000$ epochs where $B$ equals all training nodes. The learning rate ${\textsf{lr}}=0.001$.

Table \ref{ogb_table_main} compares \SVI{} against SGD under various network sizes. We see that \SVI{} consistently reaches higher initial and final classification accuracies on training, validation, and test data. Figure \ref{ogb_fig} shows the convergence of prediction accuracy along training epochs, where \SVI{} consistently remains higher in terms of prediction accuracies along training epochs. We believe such faster convergence can benefit large-scale experiments, where it is computationally demanding to run many training epochs.

\begin{figure}[!t]
     \begin{minipage}{\textwidth}
         \includegraphics[width=\linewidth]{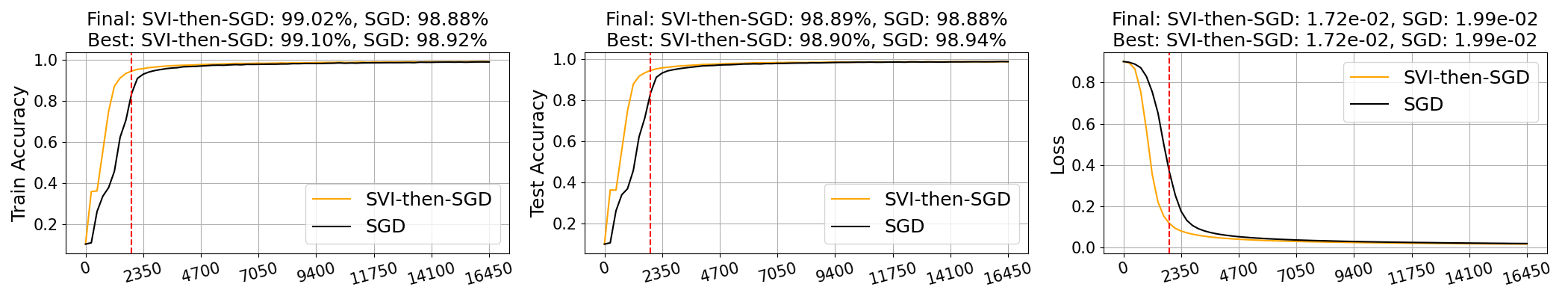}
         \subcaption{MNIST by LeNet: training accuracy (left), test accuracy (middle), and training loss (right).}
     \end{minipage}
     \begin{minipage}{\textwidth}
         \includegraphics[width=\linewidth]{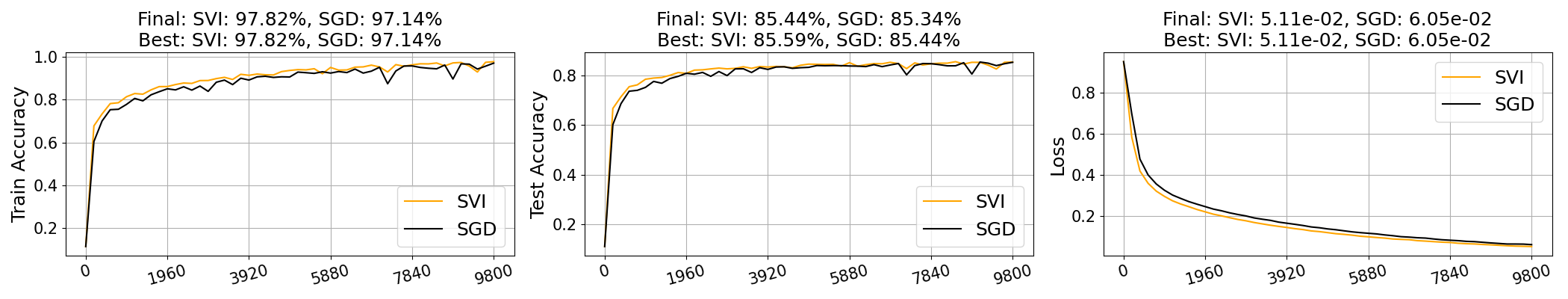}
         \subcaption{CIFAR10 by VGG-16: training accuracy (left), test accuracy (middle), and training loss (right).}
     \end{minipage}
     \cprotect \caption{Classification accuracies and training losses by both methods. We plot the metrics over training batches in each sub-figure. In the title, ``Final'' represents the metric at the end of all training epochs, and ``Best'' represents the highest/lowest metric throughout training epochs. On MNIST, training before the dashed-dotted red line is by \SVI{}, and we continue to train the SVI-warm-started model afterward by SGD to reach the optimal performance.}
     \label{fig:vision}
\end{figure}

\subsubsection{Image classification datasets}\label{sec:img} 

We compare \SVI \ with the gradient-based method on training image classifiers on the MNIST dataset \citep{MNIST} and the CIFAR-10 dataset \citep{CIFAR10}. On MNIST, we train the LeNet \citep{LENET} model, which consists of two convolutional layers as image feature extractors, followed by prediction with three fully-connected layers. On CIFAR10, we train the VGG-16 model \citep{vgg16}, which has 16 convolutional layers. The \textsf{lr} is fixed as 0.005 throughout training, and we fix the batch size $B$ to be 128.

Figure \ref{fig:vision} shows training and test metrics over training batches. Note that on MNIST, we train the initial 10\% of total training batches by \SVI{} and the rest 90\% by SGD, and we call this hybrid technique \SVI{}-then-SGD. We do so because \SVI{} shows fast initial convergence, yet to reach the optimal performance after the initial stages, gradient-based methods can be more desirable. On MNIST, the hybrid approach shows faster initial convergence than SGD due to the use of \SVI{} and reaches higher accuracy and lower loss by the end of all training batches. On CIFAR-10, the performance of \SVI{} is also competitive.

\section{Conclusion}\label{sec:conclusion}

We have investigated how a new monotone VI approach can be useful in training neural networks, with prediction guarantees for last-layer fine-tuning and competitive performance as SGD. We specifically provide theoretical justifications when training parameters in the last layer, assuming previous layers are frozen. On various synthetic and real-data examples, our \SVI{} improves efficiency in the early stage of training and reaches competitive/better final performance as SGD.

The work can be extended in several directions. Testing the performance of \SVI{} on training large models with high-dimensional data is important. Theoretically, we wish to extend the analyses to multi-layer network training. Application-wise, addressing a broader range of problems in GNN is helpful, including edge and graph classification \citep{Zhou2020GraphNN} and various regression problems.

\section*{Acknowledgements}
This work is partially supported by an NSF CAREER CCF-1650913, NSF DMS-2134037, CMMI-2015787, CMMI-2112533, DMS-1938106, DMS-1830210, and the Coca-Cola Foundation. XC is also partially supported by NSF DMS-2237842 and Simons Foundation.

\bibliography{references}
\bibliographystyle{plainnat}

\appendix

\section{Proofs}\label{sec:theory_append}

\begin{proof}[Proof of Lemma \ref{lem1}]
We first verify the monotonicity of $F$ by considering vectorized parameters, where for a matrix $A\in \R^{m\times n}$, $\text{vec}(A) \in \R^{mn}$ by stacking vertically columns of $A$. Note that this vectorization is simply used to make sure the Euclidean inner product between $F$ and $\theta_L$ is well-defined; the fundamental meaning of $\theta_L$ (e.g., as the channel-mixing coefficient) remains unchanged. With an abuse of notation, we use the same $\theta_L$ and $\bTheta$ to denote the vectorized parameter and the corresponding parameter space. For any $\theta_{1,L}, \theta_{2,L} \in \bTheta$,
\begin{align*}
   & \langle F(\theta_{1,L})-F(\theta_{1,L}), \thetalower{1,L}-\thetalower{2,L}
     \rangle \\
    & = \langle \EE_X\{\eta^*(X)^{\intercal}(\phi_L(\eta^*(X)\thetalower{1,L})-\phi_L(\eta^*(X)\thetalower{2,L})\}, \thetalower{1,L}-\thetalower{2,L} \rangle\\
    & =\EE_X\{ (\phi_L(\eta^*(X)\thetalower{1,L})-\phi_L(\eta^*(X)\thetalower{2,L}))^T (\eta^*(X)\thetalower{1,L}-\eta^*(X)\thetalower{2,L})\} \\
    & \geq \lambda_{\min}(\nabla \phi_L) \EE_X\{ \|\eta^*(X)\thetalower{1,L}-\eta^*(X)\thetalower{2,L}\|^2_2\} \\ 
    & \geq \underbrace{ \lambda_{\min}(\nabla \phi_L) \lambda_{\min}(\EE_X\{\eta^*(X)^{\intercal}\eta^*(X)\})}_{\kappa} \|\thetalower{1,L}-\thetalower{2,L}\|_2^2\},
\end{align*}
where $\kappa$ is defined in \eqref{eq:modulus}. The first equality uses the fact that the $Y$ part is cancelled and the first inequality holds when $\phi_L$ is \textit{continuously differentiable} on its domain. 

We then verify the $K_2$-Lipschitz continuity of $F$. For any $\theta_{1,L}, \theta_{2,L} \in \bTheta$,
\begin{align*}
    \|F(\theta_{1,L})-F(\theta_{2,L})\|_2
    &=\EE_X\{\| \eta^*(X)^{\intercal} (\phi_L(\eta^*(X)\thetalower{1,L})-\phi_L(\eta^*(X)\thetalower{2,L})\|_2\}\\
    & \leq \EE_X\{ \|\eta^*(X)^{\intercal}\|_2 \|\phi_L(\eta^*(X)\thetalower{1,L})-\phi_L(\eta^*(X)\thetalower{2,L})\|_2\} \\
    & \leq  K\EE_X\{ \|\eta^*(X)\|_2 \|\eta^*(X)\thetalower{1,L}-\eta^*(X)\thetalower{2,L}\|_2\}\\
    & \leq \underbrace{K \EE_X\{ \|\eta^*(X)\|^2_2\}}_{K_2} \|\thetalower{1,L}-\thetalower{2,L}\|_2^2,
\end{align*}
where $K_2$ has been defined. We repeated used the Cauchy–Schwarz inequality and the last inequality relies on the assumption that $\phi_L$ is $K$-Lipschitz continuous.

Note that given random samples $\{X_1,\ldots, X_N\}$, the quantities $\lambda_{\min}(\EE_X\{\eta^*(X)^{\intercal}\eta^*(X)\})\}$ and $\EE_X\{ \|\eta^*(X)\|^2_2\}$ can be empirically approximated by sample averages. Thus, under $\theta^*_L$,
\begin{align*}
    F(\theta^*_L)
    &=\EEXY{\eta^{\intercal}[\phi_L (\eta^*(X)\theta^*_L)-Y].}\\
    &=\EEXY{\eta^*(X)^{\intercal}[\phi_L (\eta^*(X)\theta^*_L)-\phi_L (\eta^*(X)\theta^*_L)}=0,
\end{align*}
where we used the fact tht $\condexp{Y}{X}=\phi_L (\eta^*(X)\theta^*_L).$

\end{proof}
\begin{proof}[Proof of Lemma \ref{thm:alg_guarantee}]
The proof employs classical techniques when analyzing the convergence of projection descent in stochastic optimization, which appear in \citep[Proposition 3.2]{VI_est}.

First, for any $\theta_L \in \bTheta$, 
\begin{align*}
    \mathbb{E}_{(X,\YTheta)} \{\|\eta^*(X)^{\intercal}\phi_L(\eta^*(X)\theta_L)\|_2\}
    & = \mathbb{E}_{X} \{\|\mathbb{E}_{\YTheta}\{\eta^*(X)\YTheta\}\|_2\}\\
    & \leq \mathbb{E}_{X} \mathbb{E}_{\YTheta}\{\|\eta^*(X)\YTheta\}\|_2\} && [\text{Jensen's Inequality}] \\
    & = \mathbb{E}_{(X,\YTheta)} \{\|\eta^*(X)\YTheta\}\|_2\} \leq M.
\end{align*}
By the form of $F$, we then have that $\EEXY{\|F(\theta_L)\|_2^2} \leq 4M^2$ for any $\theta_L$.

Next, note that each $\hatThetat{t}$ is a deterministic function of $\mathcal D^{\rm Tr}$. Define the difference of estimation and its expected value as  
\[
D_t(\mathcal D^{\rm Tr})=\frac{1}{2} \normsquare{\hatThetat{t}-\thetastar{L}}, \ d_t= \mathbb{E}_{\mathcal D^{\rm Tr} }\{D_t(\mathcal D^{\rm Tr})\}.
\]
As a result,
\begin{align*}
    D_t(\mathcal D^{\rm Tr}) 
    & = \frac{1}{2} \normsquare{\operatorname{Proj}_{\bTheta}\left[\hatThetat{t-1}-\gamma_{t} \Femp^T(\hatThetat{t-1})-\thetastar{L}\right]} \\
    & \leq \frac{1}{2} \normsquare{\hatThetat{t-1}-\gamma_{t} \Femp^T(\hatThetat{t-1})-\thetastar{L}} \quad \commentalign{The projection is a contraction} \\
    & = \frac{1}{2} \normsquare{\hatThetat{t-1}-\thetastar{L}}-\gamma_t \Femp^T(\hatThetat{t-1})(\hatThetat{t-1}-\thetastar{L})+\frac{1}{2}\gamma_t^2 \normsquare{\Femp^T(\hatThetat{t-1})}.
\end{align*}
Taking expectation of both sides with respect to $\mathcal D^{\rm Tr} $ yields 
\begin{align*}
    d_t 
    & \leq \frac{1}{2} d_{t-1}-\gamma_t \mathbb{E}_{\mathcal D^{\rm Tr} } \left[\Femp^T(\hatThetat{t-1})(\hatThetat{t-1}-\thetastar{L}) \right]+2\gamma_t^2M^2 \\
    & \leq (1-2\kappa\gamma_t)d_{t-1}+2\gamma_t^2M^2,
\end{align*}
where the last inequality follows by noting that $\Femp$ is an unbiased estimator of $F$, which satisfies 
\[
F(\theta_L)^T(\theta_L-\thetastar{L})\geq \kappa \|\theta_L-\thetastar{L}\|_2,
\]
due to $F(\thetastar{L})=0$. Then, using triangle inequality yields the result.

Lastly, we prove by induction that if we define $R=(2M^2)/\kappa^2, \ \gamma_t=1/\kappa(t+1)$, we have
\[
    d_t \leq \frac{R}{t+1}.
\]
\noindent \textit{(Base case when $t=0$.)} Let $B$ be the $\|\cdot\|_2$ diameter of $\bTheta$ (e.g., $\normsquare{\theta_1-\theta_2}\leq B^2 \ \forall(\theta_1,\theta_2) \in \bTheta.$ Denote $\theta_L^+, \theta_L^- \in \B$ to satisfy $\normsquare{\theta_L^+-\theta_L^-}=B^2.$
By the definition of $\kappa$,
\[
\langle F(\theta_L^+)-F(\theta_L^-),\theta_L^+-\theta_L^- \rangle \geq \kappa \|\theta_L^+-\theta_L^-\|_2^2=\kappa B^2.
\]
Meanwhile, the Cauchy-Schwarz inequality yields
\[
\langle F(\theta_L^+)-F(\theta_L^-),\theta_L^+-\theta_L^- \rangle= \langle \eta^*(X)(\phi_L(\eta^*(X))\theta_L^+)-\eta^*(X)(\phi_L(\eta^*(X))\theta_L^-),\theta_L^+-\theta_L^- \rangle \leq 2MB.
\]
Thus, $B\leq 2M/\kappa$. As a result, $B^2/2\leq 2M^2/\kappa^2=R$. Because $d_0=\normsquare{\hatThetat{0}-\thetastar{L}}\leq B^2,$ 
\[
d_0 \leq 2R = \frac{4M^2}{\kappa^2}.
\]
\noindent \textit{(The inductive step from $t-1$ to $t$.)} Note that by the definition of $\gamma_t$, $\kappa\gamma_t=1/(t+1)\leq 1/2$. Thus,
\begin{align*}
    d_t & \leq (1-2\kappa\gamma_t)d_{t-1}+2\gamma_t^2M^2 \\
    & = \frac{R}{t} (1-\frac{2}{t+1})+\frac{R}{(t+1)^2} \leq \frac{R}{t+1},
\end{align*}
whereby the proof is complete by the definition of $d_t$ and $R$.
\end{proof}

\begin{proof}[Proof of Theorem \ref{thm:generalization_err}]
We have that
\begin{align*}
    & \mathbb E_{X,\mathcal D^{\rm Tr}}\{\|\widehat{Y}(X, \widehat{\theta}_L^{(T)})-\condexp{Y}{X}\|^2_2\} \\
    = & \mathbb E_{X,\mathcal D^{\rm Tr}} \{ 
    \|\phi_L(X_{L}^*\widehat{\theta}_L^{(T)})- \phi_L(X_{L}^* \theta^*_L)\|^2_2 \} \\
    \overset{(i)}{\leq} & \mathbb E_{X,\mathcal D^{\rm Tr}} \{
    K^2 \|X_{L}^*\widehat{\theta}_L^{(T)}- X_{L}^*\theta^*_L\|^2_2
    \} \\
    = & \mathbb E_{X,\mathcal D^{\rm Tr}} \{ 
    K^2 [\widehat{\theta}_L^{(T)}-\theta^*_L]^TX_{L}^{* \intercal}X_{L}^*[\widehat{\theta}_L^{(T)}-\theta^*_L]
    \} \\
    \overset{(ii)}{=} & \mathbb E_{\mathcal D^{\rm Tr}} \{ 
    K^2 [\widehat{\theta}_L^{(T)}-\theta^*_L]^T
    \mathbb E_X \{X_{L}^{* \intercal}X_{L}^*\}
    [\widehat{\theta}_L^{(T)}-\theta^*_L]
    \} \\
    \leq & K^2 \mathbb  \lambda_{\max}(E_X\{X_{L}^{* \intercal} X_{L}^*\}) \mathbb E_{\mathcal D^{\rm Tr}} \{ 
    \|\widehat{\theta}_L^{(T)}-\theta^*_L\}^2_2]
    \},
\end{align*}
where (i) holds by $K$-Lipschitz assumption on $\phi_L$ and (ii) holds by the independence of $X$ (as a new feature) and $\widehat{\theta}_L^{(T)}$ (depending on training data only).
We can then use the bound on $\mathbb E_{\mathcal D^{\rm Tr}}\{\|\hat{\theta}^{(T)}_L-\theta_L\|_2\}$ from the previous lemma to complete the proof.
\end{proof}

\begin{proof}[Proof of Theorem \ref{thm:generalization_err_nostrong}] For a given $\theta_L \in \Theta$, let $N_{\bTheta}(\theta_L)=\{y\in \R^{p} | \langle y, \theta'-\theta_L \rangle, \forall \theta' \in \bTheta\}$ denote the normal cone of $\bTheta$ at $\theta_L$. The first step is to bound the expected value of the norm of $F$ evaluated at the stochastic OE estimate. This bound results from \citep[Theorem 3.8]{kotsalis2022simple}, where for any $\theta_L \in \bTheta$, the residual
\[
\EE[\text{res}(\theta_L)]\leq \epsilon, \ \text{res}(\theta_L)=\min_{y\in -N_{\bTheta}(\theta_L)} \|y-F(\theta_L)\|_2
\]
acts as the termination criteria for the recurrence under a certain choice of the Bregman's distance $V(a,b)$; we let $V(a,b)=\|a-b\|^2_2/2$ in our case. 

Under assumptions on $F$ and choices of step sizes, we can thus restate \citep[Theorem 3.8]{kotsalis2022simple} in our special case as
\[
\EE_{\mathcal D^{\rm Tr}}[\text{res}(\hat{\theta}_L^{(R)})] \leq \frac{3\sigma}{\sqrt{T}}+\frac{12K_2\sqrt{2\|\thetastar{L}\|_2^2+\frac{2\sigma^2}{L^2}}}{\sqrt{T}}.
\]
When $\bTheta$ is the entire space, $N_{\bTheta}(\theta_L)=\{\boldsymbol 0\}$. As a result, $\text{res}(\hat{\theta}_L^{(R)})=\|F(\hat{\theta}_L^{(R)})\|_2$.

Furthermore, recall the fact that for a matrix $A\in \R^{m \times n}$ and vectors $x,x' \in R^n$, we have \[
\|x-x'\|_2\leq \|A(x-x')\|_2/\sigma_{\min}(A),
\]
where $\sigma_{\min}(A)$ denotes the smallest singular value of $A$. Thus, by letting $A=\eta^*(X), x=\widehat{Y}(X, \widehat{\theta}_L^{(R)}), x'=\condexp{Y}{X}$ we have in expectation that 
\[
\EE_{\mathcal D^{\rm Tr}} \{\| \EE_X\{ \sigma_{\min} (\eta^*(X))[\widehat{Y}(X, \widehat{\theta}_L^{(R)})-\condexp{Y}{X}]\}\|_2 \} \leq \EE_{\mathcal D^{\rm Tr}} \|F(\hat{\theta}_L^{(R)})\|_2,
\]
where we used the fact $ F(\hat{\theta}_L^{(R)})=\EEXY{\eta^*(X)^{\intercal}[\phi_L (\eta^*(X)\hat{\theta}_L^{(R)})-Y]}=\EE_X\{\eta^*(X)^{\intercal}[\widehat{Y}(X, \widehat{\theta}_L^{(R)})-\condexp{Y}{X}]]\}.$
\end{proof}

\begin{proof}[Proof of Proposition \ref{prop:equivalence}]
For notation simplicity, denote $\eta=\eta^*(X)$ and $\theta=\theta_L$. Meanwhile, for two vectors $a,b\in \R^n$, the notation $a/b$ denotes the point-wise division. $\boldsymbol 1$ denotes a vector of all 1.

We first consider the binary cross-entropy loss $\Lc(Y,\theta_L)$ defined in \eqref{eq:binaryCE}. Note that we have
\begin{align*}
     \Lc(Y,\theta_L)
     &=-Y\log (\phi_L(\eta^*(X)\theta_L))-(1-Y)\log (1-\phi_L(\eta^*(X)\theta_L))\\
     &= -Y^T \log(\exp(\eta \theta)/(\boldsymbol 1+\exp(\eta \theta))) - (\boldsymbol 1-Y)^T\log(\boldsymbol 1/(\boldsymbol 1+\exp(\eta \theta)))\\
     &= -Y^T\eta \theta+Y^T \log(\boldsymbol 1+\exp(\eta \theta))+(\boldsymbol 1-Y)^T\log(\boldsymbol 1+\exp(\eta \theta))\\
     &= \boldsymbol 1^T \log(\boldsymbol 1+\exp(\eta \theta))-Y^T\eta \theta.
\end{align*}
Thus, the gradient with respect to $\theta$ can be written as 
\[
    \nabla_{\theta} \Lc(Y,\theta_L)
    =\eta^T\frac{\exp(\eta \theta)}{\boldsymbol 1+\exp(\eta \theta)}-\eta^TY = \eta^*(X)^{\intercal}[\phi_L(\eta^*(X)\theta_L)-Y].
\]
Taking expectation thus yields the result.

We next consider the categorical cross-entropy loss $\Lc(Y,\theta_L)$ defined in \eqref{eq:multiCE}. Note that we have 
\begin{align*}
    \Lc(Y,\theta_L)
     &=-e_{Y}^T \log(\phi_L(\eta^*(X)\theta_L))\\
     &=\left [-e_{Y}^T \log\left (\frac{\exp(\eta\theta)}{\boldsymbol 1^T\exp(\eta\theta)}\right)\right ]\\
     &= -e_{Y}^T(\eta\theta)+\log(\boldsymbol 1^T\exp(\eta\theta)).
\end{align*}
Thus, the gradient with respect to $\theta$ can be written as
\begin{align*}
    \nabla_{\theta} \Lc(Y,\theta)
    & = -\eta^T e_{Y}+\eta^T \frac{\exp(\eta\theta)}{\boldsymbol 1^T\exp(\eta\theta)} \\
    & = \eta^T[\phi_L(\eta\theta)-e_{Y}] 
    = \eta^*(X)^{\intercal}[\phi_L(\eta^*(X)\theta_L)-Y],
\end{align*}
where in the definition of $F(\theta)$ in \eqref{eq:operatr}, $Y=e_Y \in \R^{k+1}$ if $Y$ belongs to more than 2 classes. Taking expectation thus yields the result. 
\end{proof}

\end{document}